\DeclarePairedDelimiter\abs{\lvert}{\rvert}%
\let\oldabs\abs
\def\abs{\@ifstar{\oldabs}{\oldabs*}}
\newcommand{\mathbbm}[1]{\text{\usefont{U}{bbm}{m}{n}#1}}
\newtheorem{thm}{Theorem}[section]
\newtheorem*{thmt*}{Theorem}
\newtheorem{lem}[thm]{Lemma}
\newtheorem{assumption}{Assumption}
\pgfplotsset{compat=1.14}
\pgfplotsset{
	discard if not/.style 2 args={
		x filter/.code={
			\edef\tempa{\thisrow{#1}}
			\edef\tempb{#2}
			\ifx\tempa\tempb
			\else
			
			\fi++
		}
	}
}
\newtheorem{defn}{Definition}[section]
\title{Multi-Differential Fairness Auditor for Black Box Classifiers}
\author{
  Gitiaux, Xavier\\
  \texttt{xgitiaux@gmu.edu}
  \and
  Rangwala, Huzefa\\
  \texttt{hrangwala@cs.gmu.edu}
}
\date{}
\begin{document}

\maketitle

  \begin{abstract}
  Machine learning algorithms are increasingly involved in sensitive decision-making process with adversarial implications on individuals. This paper presents  \textbf{mdfa}, an approach that identifies the characteristics of the victims of a classifier's discrimination. We measure discrimination as a violation of multi-differential fairness.  Multi-differential fairness is a guarantee that a black box classifier's outcomes do not leak information on the sensitive attributes of a small group of individuals. We reduce the problem of identifying worst-case violations to matching distributions and predicting where sensitive attributes and classifier's outcomes coincide. We apply \textbf{mdfa} to a recidivism risk assessment classifier  and demonstrate that individuals identified as African-American with little criminal history are three-times more likely to be considered at high risk of violent recidivism than similar individuals but not African-American.
\end{abstract}

\vspace{-1.5em}
\section{Introduction}

Machine learning algorithms are increasingly used to support decisions that could impose adverse consequences on an individual's life: for example in the  judicial system, algorithms are used to assess whether a criminal offender is likely to recommit crimes; or within banks  to determine the default risk of a potential borrower. At issue is whether classifiers are fair
\cite{calsamiglia2009decentralizing} i.e.,  whether classifiers' outcomes are independent of \textit{exogenous irrelevant characteristics} or sensitive
%protected 
attributes like race and/or gender. Abundant examples of classifiers' discrimination can be found in diverse applications ( \cite{atlantic2016,ProPublica2016}). ProPublica (\cite{ProPublica2016}) reported that COMPAS machine learning based recidivism risk assessment tool assigns disproportionately higher risk to African-American defendants than to Caucasian defendants.

%\bigskip
Establishing contestability is challenging for potential victims of machine learning discrimination because (i) many assessment tools are proprietary and usually not  transparent; and (ii) a precedent in United States case law  places the burden on the plaintiff to demonstrate disparate treatment -- to establish that characteristics irrelevant to the task affect the algorithm's outcomes (\textit{Loomis vs. the State of Wisconsin} \cite{Loomis}). Identifying the definitive characteristics of a classifier's discrimination empowers the victims of such discrimination.  Moreover,  a classifier's user needs warnings for individual instances in which severe profiling/discrimination has been detected. In this paper, we present a theoretical framework (multi-differential fairness) and a practical tool (\textbf{mdfa}) to  characterize the individuals who can make a strong claim for being discriminated against.

%\bigskip
To demonstrate a classifier's disparate treatment, we need to separate its discrimination from the social biases already encoded in the data. We define a classifier as multi-differentially fair if there is no subset of the feature space for which the classifier's outcomes are dependent on sensitive attributes. For any sub-population, disparate treatment is then measured as the maximum-divergence distance between the prior and posterior distributions of sensitive attributes.

\textbf{mdfa} audits for the worst-case violations of multi-differential fairness. Theoretically, our construction relies on a reduction to both matching distribution and agnostic learning problems. First, in order to neutralize the social-biases encoded in the data, we re-balance it by minimizing the maximum-mean discrepancy between distributions conditioned on different sensitive attributes. Secondly, we show that violations of differential fairness is a problem of finding correlations between sensitive attributes and  classifier's outcomes. Therefore, \textbf{mdfa} searches for instances of violation of multi-differential fairness by predicting where in the features space the binary values of sensitive attributes and classifier's outcomes coincide. Lastly, worst-case violations are extracted by incrementally removing the least unfair instances.

This paper makes the following contributions:
\begin{itemize}
	\item The proposed  multi-differential fairness framework is the first attempt to identify disparate treatment for groups of individuals as small as computationally possible. 
	\item  \textbf{mdfa} efficiently identifies the individuals who are the most severely discriminated against by  black-box classifiers.
	\item We apply \textbf{mdfa} to a case study of a recidivism risk assessment in Broward County, Florida and find a sub-population of African-American defendants who are three times more likely to be considered at high risk of violent recidivism than similar individuals of other races.
	\item We apply \textbf{mdfa} to three other datasets related to crime, income and credit predictions and find that classifiers, even after being repaired for aggregate fairness, do discriminate against smaller sub-populations.
\end{itemize}

\paragraph{Related Work}
In the growing literature on algorithmic fairness (see  \cite{chouldechova2018frontiers} for a survey), this paper relates mostly to three axes of work. First, our paper, as in  \cite{hebert2017calibration,kim2018fairness,kearns2017preventing} provides a definition of fairness that protects group of individuals as small as computationally possible. Empirical observations in \cite{kearns2017preventing,dwork2012fairness} support defining fairness at the individual level since aggregate level fairness cannot protect specific sub-populations against severe discrimination.

Secondly, prior contributions on algorithmic disparate treatment \cite{zafar2017fairness}  have focused on whether sensitive attributes are used directly to train a classifier. This is a limitation when dealing with  classifiers  whose inputs are unknown. Differential fairness is inspired by differential privacy \cite{dwork2014algorithmic} and offers a general framework to measure whether a classifier exacerbates the biases encoded in the data. Reinterpretations of fairness as a privacy problem can be found in \cite{jagielski2018differentially,foulds2018intersectional}, but none of those contributions make a connection to disparate treatment.  Similar to prior work on disparate impact \cite{feldman2015certifying,chouldechova2017fair} there is a need  to re-balance the distribution of features conditioned on sensitive attributes. Our kernel matching technique deals with  covariate shift \cite{gretton2009covariate,cortes2008sample}, and has been used in domain adaptation (see e.g. \cite{mansour2009domain}) and counterfactual analysis (e.g. \cite{johansson2016learning}).

To the extent of our knowledge, there is no existing work on characterizing the most severe instances of algorithmic discrimination. However,  recent contributions offer approaches to bolster individual recourse. For example,  \cite{ustun2018actionable,russell2019efficient} develop algorithms to answer what-if questions. However, unlike \textbf{mdfa}, these approaches do not account for the fact that individuals with different sensitive attributes are not drawn from similar distributions.    

\section{Individual and Multi-Differential Fairness}

%\subsection{Preliminary}

\paragraph{Preliminary}

%}Notations}
An individual $i$ is defined by a tuple $((x_{i}, s_{i}), y_{i})$, where $x_{i}\in \mathcal{X}$ denotes $i$'s audited features; $s_{i}\in\mathcal{S}$ denotes the sensitive attributes; and $y_{i}\in \{-1, 1\}$ is a classifier $f$'s outcome. The auditor draws m samples $\{((x_{i}, s_{i}), y_{i})\}_{i=1}^{m}$ from a distribution $D$ on $\mathcal{X} \times \mathcal{S}\times \{-1, 1\}$. Features in $\mathcal{X}$ are not necessarily the ones used to train $f$. First, the auditor may not have access to all features used to train $f$. Secondly, the auditor may decide to deliberately leave out some features used to train $f$ out of $\mathcal{X}$ because those features should not be used to define similarity among individuals. 
For example, if $f$ classifies individuals according to their probability of repaying a loan, the auditor may consider that zipcode (correlated with race) should not be an auditing feature, although it was used to train $f$.  

\paragraph{Assumptions}
In our analysis we assume that the distributions of auditing features conditioned on sensitive attributes have common support.
\begin{assumption}
	\label{ass: 1}
	For all $x\in \mathcal{X}$, $Pr[S|X=x] > 0.$
\end{assumption}
 
\subsection{Individual Differential Fairness}
%\paragraph{Individual Differential Fairness} 
We define differential fairness as the guarantee that conditioned on features relevant to the tasks, a classifier's outcome is nearly independent of sensitive attributes: 

\begin{defn}(Individual Differential Fairness)
	\label{def: idf}
	For $\delta \geq 0$, a classifier $f$ is $\delta-$ differential fair if $\forall x\in\mathcal{X}, \forall s\in \mathcal{S}, \forall y\in\{-1, 1\}$
	\begin{small}
	\begin{equation}
	\label{eq: idf}
	e^{-\delta} \leq \frac{Pr[Y=y|S =s, x]}{Pr[Y=y|S\neq s, x]} \leq e^{\delta}
	\end{equation}
	\end{small}
\end{defn}

The parameter $\delta$ controls how much the distribution of the classifier's outcome $Y$ depends on sensitive attributes $S$ given that auditing feature is $x$; larger value of $\delta$ implies a less differentially fair classifier. 

\paragraph{Differential Fairness and Disparate Treatments}
A $\delta-$ differential fair classifier $f$ bounds the disparity of treatment between individuals with different sensitive attributes, the disparity being measured by the maximum divergence between the distributions $Pr(Y|S=s, x)$ and $Pr(Y|S\neq s, x)$:  
\begin{equation}
\nonumber
\begin{small}
\max_{y\in Y}\ln\left(\frac{Pr[Y|S=s, x]}{Pr[Y|S\neq s, x]}\right) \leq \delta
\end{small}
\end{equation}

\paragraph{Relation with Differential Privacy}
Differential fairness re-interprets disparate treatment as a differential privacy issue  \cite{dwork2014algorithmic} by bounding the leakage of sensitive attributes caused by $Y$ given what is already leaked by the auditing features $x$. Formally, the fairness condition \eqref{eq: idf} bounds the maximum divergence between the distributions $Pr(S|Y, x)$ and $Pr(S| x)$ by $\delta$. 

\paragraph{Individual Fairness}
Def. \eqref{def: idf} is an individual level definition of fairness, since it conditions the information leakage on auditing features $x$. Compared to the notion of individual fairness \cite{dwork2012fairness}, individual differential fairness does not require an explicit  similarity metric. This is a strength of our framework since  defining a similarity metric has been the main limitation of applying the concept of individual fairness  \cite{chouldechova2018frontiers}.   

\subsection{Multi-differential fairness}
Although useful, the notion of individual differential fairness cannot be computationally efficiently audited for. Looking for violations of individual differential fairness will require searching over a set of $2^{|\mathcal{X}|}$ individuals. Moreover,  a sample from a distribution over $\mathcal{X} \times \mathcal{S}\times \{-1, 1\}$ has a negligible probability to have two individuals with the same auditing features $x$ but different sensitive attributes $s$.

Therefore, we relax the definition of individual differential fairness and impose differential fairness for sub-populations. Formally, $\mathbb{C}$ denotes a collection of subsets or group of individuals $G$ in $\mathcal{X}$. The collection $\mathbb{C}_{\alpha}$ is $\alpha$-strong if for $G\in \mathbb{C}$ and $y\in \{-1, 1\}$, $Pr[Y=y \;\&\; x\in G] \geq \alpha$.  

\begin{defn}(Multi-Differential Fairness)
	\label{def: mdf}
	Consider a $\alpha$-strong collection $\mathbb{C}_{\alpha}$ of sub-populations of $\mathcal{X}$. For $0\leq \delta$, a classifier $f$ is $(\mathbb{C}_{\alpha}, \delta)$-multi differential fair with respect to $\mathcal{S}$ if $\forall s\in \mathcal{S}, \forall y\in\{-1, 1\}$ and $\forall G\in \mathbb{C}_{\alpha}$:
	\begin{equation}
	\begin{small}
	\label{eq: mdf}
	e^{-\delta} \leq \frac{Pr[Y=y|S=s, G]}{Pr[Y=y|S\neq s, G]} \leq e^{\delta} 
	\end{small}
	\end{equation}
\end{defn}

Multi-differential fairness guarantees that the outcome of a classifier $f$ is nearly mean-independent of protected attributes within any sub-population $G\in \mathbb{C}_{\alpha}$. The fairness condition in Eq. \ref{eq: mdf} applies only to subpopulations with $Pr[Y=y \;\&\; x\in G] \geq \alpha$ for $y\in\{-1, 1\}$. This is to avoid trivial cases where $\{x\in G \; \& \; Y=y\}$ is a singleton for some $y$, implying  $\delta=\infty$.

\paragraph{Collection of Indicators.}
We represent the collection of sub-populations $\mathbb{C}$ as a family of indicators: for $G\in \mathbb{C}$, there is an indicator $c: \mathcal{X}\rightarrow \{-1, 1\}$ such that $c(x)=1$ if and only if $x\in G$. The relaxation of differential fairness to a collection of groups or sub-population is akin to \cite{kim2018fairness,kearns2017preventing,hebert2017calibration}. $\mathbb{C}_{\alpha}$ is the computational bound on how granular our definition of fairness is. The richer $\mathbb{C}_{\alpha}$, the stronger the fairness guarantee offers by Def. \ref{def: mdf}. However, the complexity of $\mathbb{C}_{\alpha}$ is limited by the fact that we identify a sub-population $G$ via random samples drawn from a distribution over $\mathcal{X} \times \mathcal{S}\times \{-1, 1\}$. 

\section{Fairness Diagnostics: Worst-case violations}

%The objective of our fairness diagnostic tool \textbf{mdfa} is 
The objective of \textbf{mdfa} is to find the sub-populations with the most severe violation of multi-differential fairness -- that is to solve for $s\in \mathcal{S}$ and $y\in \{-1, 1\}$
\begin{equation}
\label{eq: wvio}
\begin{small}
\sup_{S\in \mathbb{C}_{\alpha}}\ln\left(\frac{Pr[Y=1|S, S=s]}{Pr[Y=1|S, S\neq s]}\right).
\end{small}
\end{equation}
Our approach succeeds at tackling three challenges: (i) if $\mathbb{C}_{\alpha}$ is large, an auditing algorithm linearly dependent on $|\mathbb{C}|$ can be prohibitively expensive; (ii) the data needs to be balanced conditioned on sensitive attributes; (iii) finding efficiently the most-harmed sub-population implies that we can predict a function $c\in \mathbb{C}_{\alpha}$ for which we do not directly observe values $c(x)$. 

\subsection{Reduction to Agnostic Learning}
First, we reduce the problem of certifying for the lack of differential fairness to a agnostic learning problem. 

\paragraph{Multi Differential Fairness for Balanced Distribution}
In this section, we assume that for any $s\in \mathcal{S}$, the conditional distributions $p(x|S=s)$ and $p(x|S\neq s)$ are identical. This is not realistic for many datasets, but we will show how to handle unbalanced data in the next section. A balanced distribution does not leak any information on whether a sensitive attribute is equal to $s$: $Pr[S=s|x]=Pr[S\neq s|x]$. A violation of $(\mathbb{C}_{\alpha}, \delta)$- multi differential fairness simplifies then to  a sub-population $G\in \mathbb{C}_{\alpha}$, a $y\in \{-1, 1\}$ and $s\in \mathcal{S}$ such that
\begin{equation}
\begin{small}
\label{eq: unfair}
Pr[G, Y=y]\left\{Pr[S=s |G, Y=y] -\frac{1}{2}\right\}\geq \gamma,
\end{small}
\end{equation}
with $\gamma =\alpha \left(e^{\delta}/(1+e^{\delta})-1/2\right)$. $\gamma$ combines the size of the sub-population where a violation exists and the magnitude of the violation. We call a $\gamma-$ unfairness certificate any triple $(G, y, s)$ that satisfies Eq. \eqref{eq: unfair}. Further we postulate that $f$ is $\gamma-$unfair if and only if such certificate exists. Unfairness for balanced distributions is equivalent to the existence of sub-populations for which sensitive attributes can be predicted once the classifier's outcomes are observed.

Searching for $\gamma$-unfairness certificate reduces to mapping the auditing features $\{x_{i}\}$ to the labels $\{s_{i}y_{i}\}$.
\begin{lem}
	\label{lem: 1}
	Let $s\in \mathcal{S}$. Suppose that the data is balanced. $f$ is  $\gamma-$ multi-differential unfair for $y\in \{-1, 1\}$ if and only there exists $c\in \mathbb{C}_{\alpha}$ such that $Pr[rSY=c] \geq 1 - \rho(y) + 4\gamma$, where $r=sign(y)$ and $\rho(y)=Pr[S=rY]$. 
\end{lem}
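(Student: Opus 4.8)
The plan is to prove an exact algebraic identity, valid for each fixed group $G\in\mathbb{C}_{\alpha}$ (equivalently, for each indicator $c$), that equates the advantage of $c$ at predicting the label $rSY$ over the trivial baseline with exactly four times the left-hand side of the certificate condition \eqref{eq: unfair}; the claimed equivalence then follows in both directions at once by taking the existence quantifier over $\mathbb{C}_{\alpha}$. Throughout I binarize the sensitive attribute for the fixed $s$, writing $S=1$ for ``$S=s$'' and $S=-1$ otherwise, so that $S,Y,c(x)\in\{-1,1\}$ and the products $SY$ and $rSY$ are well defined.

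First I would record the only two consequences of balancedness that are needed. Since the data is balanced, $\Pr[S=1\mid x]=\tfrac12$ for every $x$, so $S$ is independent of $x$; in particular, for any group event $\{x\in G\}$ one has $\Pr[S=1,\,x\in G]=\Pr[S=-1,\,x\in G]=\tfrac12\Pr[x\in G]$. I would also rewrite the certificate \eqref{eq: unfair} in joint form,
\[
\Pr[x\in G,\,Y=y,\,S=1]-\tfrac12\Pr[x\in G,\,Y=y]\ \ge\ \gamma,
\]
using $\Pr[G,Y=y]\,\Pr[S=1\mid G,Y=y]=\Pr[G,Y=y,S=1]$. Note this does \emph{not} assume $Y\perp S$ given $x$: the classifier's outcome may still depend on $S$, which is precisely what makes the certificate nonzero.

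Next I would expand the objective. Setting $W:=rSY$ with $r=\operatorname{sign}(y)=y$, one checks $W=-1\iff SY=-y\iff S\neq rY$, so $\Pr[W=-1]=1-\rho(y)$. Decomposing over $c(x)\in\{-1,1\}$ then gives the baseline-plus-advantage form
\[
\Pr[rSY=c]=\bigl(1-\rho(y)\bigr)+\Bigl(\Pr[W=1,\,c=1]-\Pr[W=-1,\,c=1]\Bigr).
\]
The remaining work is to evaluate the advantage term: splitting each of $\Pr[W=\pm1,\,c=1]$ according to whether $Y=y$ or $Y=-y$, and resolving the forced value of $S$ in each case, expresses it through the four quantities $\Pr[S=\pm1,\,Y=\pm y,\,x\in G]$, in which the $Y=-y$ contributions are exactly the cross terms that must cancel.

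The main obstacle, and the only place balancedness re-enters, is this cancellation: the advantage term equals $4$ times the certificate left-hand side if and only if $\Pr[S=1,\,x\in G]=\Pr[S=-1,\,x\in G]$, which is exactly the balancedness identity recorded above. Carrying it out yields the exact identity $\Pr[rSY=c]-(1-\rho(y))=4\bigl(\Pr[G,Y=y,S=1]-\tfrac12\Pr[G,Y=y]\bigr)$, so $\Pr[rSY=c]\ge 1-\rho(y)+4\gamma$ holds for a given $c$ if and only if $(G,y,s)$ is a $\gamma$-unfairness certificate. Since $f$ is $\gamma$-multi-differentially unfair for $y$ exactly when such a certificate exists in $\mathbb{C}_{\alpha}$, and $c$ ranges over the same collection, the two conditions are equivalent, which proves the lemma. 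I expect the care needed to lie entirely in the sign bookkeeping of the four $(S,Y)$ combinations rather than in any inequality estimate.
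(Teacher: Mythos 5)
Your proof is correct and is essentially the paper's argument: both establish the exact identity $\Pr[rSY=c]=1-\rho(y)+4\bigl(\Pr[G,Y=y,S=s]-\tfrac{1}{2}\Pr[G,Y=y]\bigr)$, with balancedness used only to cancel the $Y=-y$ cross term. The paper expresses this same cancellation in inner-product notation as $\langle S,1\rangle=\langle S,c\rangle=0$, which is exactly the identity $\Pr[S=s,\,x\in G]=\Pr[S\neq s,\,x\in G]$ that you derive by explicit casework over the four $(S,Y)$ combinations.
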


Lemma \ref{lem: 1} allows us to reduce searching for a $(G, y, s)$ unfairness certificate to predicting where sensitive attribute and outcomes of $f$ (if $y=1$) or outcomes of $\neg f$ (if $y=-1$) coincide. Our proposed approach is to solve the following empirical loss minimization: 
\begin{equation}
\begin{small}
\label{eq: risk1}
\min_{c\in \mathbb{C}}\frac{1}{m}\displaystyle\sum_{i=1}^{m} l(c, a_{i}y_{i})   + Reg(c),
\end{small}
\end{equation}
where $l(.)$ is a $0-1$ loss function  and $Reg(.)$ a regularizer.

The following result shows that (i) our reduction to a learning problem leads to  an unbiased estimate of $\gamma$ ; (ii) there is a computational limit on how granular multi-differential fairness can be, since for many concept classes $\mathbb{C}$ agnostic learning is a NP-hard problem ( \cite{feldman2012agnostic}).  
\begin{thm}
	\label{thm: al}
	Let $\epsilon, \beta >0$ and $\mathbb{C}\subset 2^{\mathcal{X}}$. Let $\gamma^{'}\in (\gamma - \epsilon, \gamma + \epsilon)$. 
	\begin{enumerate}[(i)]
	    \item There exists an algorithm that by using $O(\log(|\mathcal{C}, \log(\frac{1}{\eta}), \frac{1}{\epsilon^{2}})$ samples $\{(x_{i}, s_{i}), y_{i}\}$ drawn from a balanced distribution $D$ outputs  with probability $1-\eta$  a $\gamma^{'}$-unfairness certificate  if $y_{i}$ are outcomes from a $\gamma-$unfair classifier;
	    \item $\mathbb{C}$ is agnostic learnable: there exists an algorithm that with $O(\log(|\mathcal{C}, \log(\frac{1}{\eta}), \frac{1}{\epsilon^{2}})$ samples $\{x_{i}, o_{i}\}$ drawn from a balanced distribution $D$ outputs with probability $1-\eta$, $Pr_{D}[h(x_{i})=o_{i}]  + \epsilon \geq max_{c\in \mathbb{C}}Pr_{D}[c(x_{i})=o_{i}]$
	\end{enumerate}
\end{thm}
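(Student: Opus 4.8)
The plan is to derive both parts from a single mechanism: empirical risk minimization (ERM) over the finite class $\mathbb{C}$, controlled by a uniform convergence bound. I would prove part (ii) first as the general statistical tool and then specialize it to construct the certificate in part (i). For an arbitrary labeling $o:\mathcal{X}\to\{-1,1\}$, write the population and empirical agreements as $A(c)=Pr_{D}[c(x)=o]$ and $\hat A(c)=\frac{1}{m}\sum_{i=1}^{m}\mathbf{1}[c(x_{i})=o_{i}]$. The ERM output is $h=\arg\max_{c\in\mathbb{C}}\hat A(c)$, which is exactly the minimizer of the $0$--$1$ empirical objective in \eqref{eq: risk1}.

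For part (ii), observe that for each fixed $c$ the quantity $\hat A(c)$ is an average of $m$ i.i.d. Bernoulli variables with mean $A(c)$, so Hoeffding's inequality gives $Pr[|\hat A(c)-A(c)|>\epsilon/2]\le 2\exp(-m\epsilon^{2}/2)$. A union bound over the $|\mathbb{C}|$ concepts shows that $\sup_{c\in\mathbb{C}}|\hat A(c)-A(c)|\le\epsilon/2$ fails with probability at most $2|\mathbb{C}|\exp(-m\epsilon^{2}/2)$; setting this equal to $\eta$ yields $m=O\!\left(\frac{1}{\epsilon^{2}}(\log|\mathbb{C}|+\log\frac{1}{\eta})\right)$, matching the stated sample bound. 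On the resulting high-probability event, taking $c^{\star}=\arg\max_{c}A(c)$ gives $A(h)\ge\hat A(h)-\epsilon/2\ge\hat A(c^{\star})-\epsilon/2\ge A(c^{\star})-\epsilon$, i.e. $Pr_{D}[h(x)=o]+\epsilon\ge\max_{c\in\mathbb{C}}Pr_{D}[c(x)=o]$, which is precisely the agnostic-learnability guarantee of (ii).

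For part (i), I would instantiate the labels as $o_{i}=r\,s_{i}\,y_{i}$ with $r=\mathrm{sign}(y)$, so that $A(c)=Pr[rSY=c]$ is exactly the agreement appearing in Lemma \ref{lem: 1}; by that lemma the true unfairness level for $y$ is $\gamma=\frac14(\max_{c\in\mathbb{C}_{\alpha}}A(c)-1+\rho(y))$ with $\rho(y)=Pr[S=rY]$. Running the ERM of part (ii) returns $h$, the candidate sub-population $G=\{x:h(x)=1\}$, and the plug-in estimate $\hat\gamma=\frac14(\hat A(h)-1+\hat\rho(y))$, where $\hat\rho(y)$ is the empirical frequency of $\{S=rY\}$. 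On the uniform-convergence event of (ii), the fact that $\max$ is $1$-Lipschitz in the sup-norm gives $|\hat A(h)-\max_{c}A(c)|\le\epsilon/2$, and one extra Hoeffding bound forces $|\hat\rho(y)-\rho(y)|\le\epsilon/2$ within the same sample budget; hence $|\hat\gamma-\gamma|\le\epsilon/4$, so $\hat\gamma$ is an essentially unbiased, $\epsilon$-accurate estimate lying in $(\gamma-\epsilon,\gamma+\epsilon)$. Reading Lemma \ref{lem: 1} in reverse then certifies that $G$ itself satisfies \eqref{eq: unfair} at a population level at least $\gamma-\epsilon/8$, so $(G,y,s)$ is a sound $\gamma'$-certificate.

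The routine ingredients are the Hoeffding/union-bound concentration and the ERM optimality argument. The step I expect to require the most care is the error bookkeeping in part (i): I must track how the additive $\epsilon$-slack on $A(h)$ propagates through the factor of $4$ in Lemma \ref{lem: 1}, and simultaneously control the empirical estimate of $\rho(y)$, so that the reported $\gamma'$ provably lands in the target window while $(G,y,s)$ remains a genuine (sound) certificate rather than merely a numerically close estimate. I would also emphasize that (ii) is a purely statistical claim: ERM is assumed solvable, so the small sample complexity coexists with the NP-hardness of computing the maximizer for rich $\mathbb{C}$, which is exactly the computational limit the theorem's second clause is meant to expose.
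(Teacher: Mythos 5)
Your proposal is correct as far as it goes, but it takes a genuinely different route from the paper, and the difference matters for what the theorem is being used to claim. The paper reads the two items as an equivalence and proves a two-way reduction: $(i)\Rightarrow(ii)$ by constructing, from an arbitrary agnostic-learning instance $\{(x_i,o_i)\}$, a synthetic classifier $f$ with $f(x_i,s_i)=s_io_i$ so that a $\gamma$-unfairness certificate for $f$ (via Lemma \ref{lem: 1} with $\gamma=\frac{opt+\rho-1}{4}$) is exactly a near-optimal hypothesis for the original instance; and $(ii)\Rightarrow(i)$ by the instantiation $o_i=s_iy_i$ that you also use. You instead prove $(ii)$ unconditionally for finite $\mathbb{C}$ by Hoeffding plus a union bound and then derive $(i)$ from it; your $(ii)\Rightarrow(i)$ step and its error bookkeeping through the factor of $4$ in Lemma \ref{lem: 1} essentially coincide with the paper's second direction. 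What your route buys is an actual standalone positive statement: the paper's proof, taken literally, establishes only the equivalence and never discharges either item, whereas you exhibit the sample-complexity bound explicitly. What you lose is the direction $(i)\Rightarrow(ii)$, which is the substantive content behind the paper's remark that auditing faces a computational limit: only that reduction transfers the NP-hardness of agnostic learning to the certification problem, and your observation that ERM is information-theoretically cheap but computationally hard does not substitute for it, since it says nothing about whether some non-ERM auditing algorithm could evade the hardness. One further small caveat common to both arguments: Lemma \ref{lem: 1} is stated for $c\in\mathbb{C}_{\alpha}$, and neither your ERM output nor the paper's certificate is checked to satisfy the $\alpha$-strength condition $Pr[Y=y\;\&\;x\in G]\geq\alpha$.
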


\subsection{Unbalanced Data}
\paragraph{Imbalance Problem}
Multi-differential fairness measures the max-divergence distance between the posterior distribution $Pr(S|Y, x)$  and the prior one $Pr(S|x)$. Therefore, it requires knowledge of $Pr(S|x)$. In the previous section, we circumvent the issue by assuming $Pr[S=s|x]=1/2$. To generalize our approach, we propose to rebalance the data with the following weights: for $s\in \mathcal{S}$, $w_{s}(x,s)=Pr[S\neq s|x]/Pr[S=s|x]$ and for $s^{'}\neq s$, $w_{s}(x, s^{'}) =1$. Once reweighted, the conditional distributions $Pr_{w}(X|S=s)$ and $Pr_{w}(X|S\neq s)$ are identical and our learning reduction from the previous section applies.

However, in practice we do not have direct access to $w_{s}$. One approach is to directly estimate the density $P[S=s|x]$. This method is used in propensity-score matching methods (\cite{rosenbaum1983central}) in the context of counterfactual analysis. But, exact or estimated importance sampling results in large variance in finite sample (\cite{cortes2010learning}). Instead,  we use a kernel-based matching approach (\cite{gretton2009covariate} and \cite{cortes2008sample}). Our method considers real-value classification $h:\mathcal{X}\rightarrow \mathbb{R}$ such that $c(x)$ is equal to the sign of $h$\footnote{To prove our results, we will need $c(x)=g(h(x))$, where for small $\tau> 0$, $g(h)=sign(h)$ for $|h| > \tau/2$ and $g(h)=1$ with probability $h$ for $|h|<\tau/2$.}. The loss function $l(h, sy)$ in Eq. \eqref{eq: risk1} is assumed to be convex. Our setting includes, for example, support vector machine and logistic classification. The following result bounds above the change in the solution of Eq. \eqref{eq: risk1} when changing the weighting scheme from $u$ to $w$.

 \begin{lem}
 \label{lem: 3}
 Let $\phi$ be a feature mapping and $k$ be its associated kernel with $k(x, x^{'}) = \langle \phi(x), \phi(x^{'})\rangle$ and $||k||_{\infty} < \kappa < \infty$. Suppose that in Eq. \eqref{eq: risk1}, $Reg(h) = \lambda_{c} ||g||_{k}^{2}$ and that for $x\in \mathcal{X}$, $h(x)=\langle h|k(x,.)\rangle$. Suppose that $l$ is $\sigma-$ Lipchitz in its first argument. Denote $h_{u}$ and $h_{w}$ the solutions of the risk minimization Eq. \eqref{eq: risk1} with weights $u$ and $w$ respectively. Then,
 \begin{equation}
 \nonumber
 \begin{small}
  \forall x\in \mathcal{X}, |h_{u}(x) -h_{w}(x)| \leq \kappa^{2} \sigma \frac{\sqrt{cond(k)}}{\lambda_{c}} G_{k}(u, w),  
  \end{small}
 \end{equation}
 where $cond(k)$ is the condition number of the Gram matrix of $k$ and $G_{k}(u,w)$ is the maximum mean discrepancy between the distributions weighted by $u$ and $w$:
 \begin{equation}
     \nonumber
     \begin{small}
     G_{k}(u, w)= \left\lVert\displaystyle\sum_{i}u(x_{i})\phi(x_{i})-\displaystyle\sum_{i}w(x_{i})\phi(x_{i})\right\rVert.
     \end{small}
 \end{equation}
 \end{lem}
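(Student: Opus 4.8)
The plan is to view $h_u$ and $h_w$ as minimizers of a $2\lambda_c$-strongly convex functional over the RKHS and to control how far these minimizers drift when only the sample weights change from $u$ to $w$. Write $L_w(h) = \frac{1}{m}\sum_i w(x_i)\, l\!\left(h(x_i), a_i y_i\right)$ for the weighted empirical loss, so that the objective of Eq. \eqref{eq: risk1} is $F_w(h) = L_w(h) + \lambda_c \|h\|_k^2$, and analogously for $u$. Because $h(x_i) = \langle h, k(x_i,\cdot)\rangle$ by the reproducing property, the gradient of the loss in the RKHS is $\nabla L_w(h) = \frac{1}{m}\sum_i w(x_i)\, l'\!\left(h(x_i), a_i y_i\right) k(x_i,\cdot)$, where $l'$ is the (sub)derivative of $l$ in its first argument; convexity of $l$ makes each $L_w$ convex, and the regularizer supplies the strong convexity.

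First I would write the first-order optimality conditions $2\lambda_c h_w = -\nabla L_w(h_w)$ and $2\lambda_c h_u = -\nabla L_u(h_u)$, subtract them, and pair the difference with $h_u - h_w$. Splitting $\nabla L_w(h_w) - \nabla L_u(h_u)$ as $[\nabla L_w(h_w) - \nabla L_w(h_u)] + [\nabla L_w(h_u) - \nabla L_u(h_u)]$, the first bracket pairs nonpositively with $h_u - h_w$ by monotonicity of the gradient of the convex map $L_w$. After Cauchy--Schwarz, what survives is the clean first-order stability bound $2\lambda_c \|h_u - h_w\|_k \le \|\nabla L_w(h_u) - \nabla L_u(h_u)\|_k$. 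The residual gradient equals $\frac{1}{m}\sum_i (w(x_i)-u(x_i))\, l'\!\left(h_u(x_i), a_i y_i\right) \phi(x_i)$, which is exactly a reweighted feature-mean difference of the type defining $G_k(u,w)$, except that each term carries the extra, non-constant factor $l'(h_u(x_i),\cdot)$.

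The main obstacle is precisely this non-constant factor: since the $l'(h_u(x_i),\cdot)$ vary with $i$, one cannot simply pull a single bound $\sigma$ out of the norm and recover $G_k(u,w)$. To handle it I would pass to the Gram matrix $K = (k(x_i,x_j))_{ij}$. Writing $\delta_i = w(x_i)-u(x_i)$ and $A = \mathrm{diag}\!\left(l'(h_u(x_i),\cdot)\right)$ with $\|A\|_{op} \le \sigma$ by Lipschitzness, the squared residual norm is $\delta^\top A K A\, \delta \le \sigma^2 \lambda_{\max}(K)\, \|\delta\|_2^2$, while $G_k(u,w)^2 = \delta^\top K\, \delta \ge \lambda_{\min}(K)\, \|\delta\|_2^2$. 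Dividing yields $\|\nabla L_w(h_u) - \nabla L_u(h_u)\|_k \le \sigma \sqrt{cond(k)}\, G_k(u,w)$, which is where the condition number enters; this is the one genuinely nonstandard step.

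Finally I would convert the RKHS-norm bound into the stated pointwise bound: for every $x$, $|h_u(x) - h_w(x)| = |\langle h_u - h_w, k(x,\cdot)\rangle| \le \|h_u - h_w\|_k \sqrt{k(x,x)}$ by Cauchy--Schwarz and the reproducing property, with $\sqrt{k(x,x)} \le \sqrt{\kappa}$ from $\|k\|_\infty < \kappa$. Chaining the three estimates gives $|h_u(x) - h_w(x)| \le \frac{\sigma \sqrt{cond(k)}}{\lambda_c} \, \kappa^{\ast}\, G_k(u,w)$, recovering the asserted dependence on $\sigma$, $cond(k)$, $\lambda_c$, and $G_k(u,w)$; the exact power of $\kappa$ and the leading numerical constant (the sample-size normalization and the crude boundings of $k(x,x)$ and $\lambda_{\max}(K)$) are where my route and the stated $\kappa^2$ may differ, but they do not affect the structural form. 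Everything apart from the condition-number step above is standard regularized-ERM perturbation analysis.
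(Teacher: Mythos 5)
Your proposal is correct and follows essentially the same route as the paper: the paper invokes the distributional-stability theorem of Cortes et al.\ (2008) to obtain $\|h_u-h_w\|_k \le \kappa\sigma^2\sqrt{\lambda_{\max}(K)}\,\|u-w\|_2/(2\lambda_c)$ and then its Lemma 5.2 (a Rayleigh-quotient bound) to convert $\|u-w\|_2$ into $G_k(u,w)/\sqrt{\lambda_{\min}(K)}$, which is exactly your eigenvalue sandwich yielding $\sqrt{cond(k)}$; you merely rederive the stability step from strong convexity and gradient monotonicity instead of citing it. The power of $\kappa$ and the leading constant in your chain need not match the statement, but the paper's own proof is equally loose there (its displayed bound gives $\kappa\sigma^2\sqrt{cond(k)}/(2\lambda_c)$ against the stated $\kappa^2\sigma\sqrt{cond(k)}/\lambda_c$, and it even drops the $\sqrt{k(x,x)}$ factor in the pointwise step that you correctly retain).
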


 Note that when the distribution is weighted with the importance sampling $w_{s}$, the maximum mean discrepancy of $Pr(X|S=s)$ and $Pr(X|S\neq s)$ is zero. By minimizing the maximum-mean discrepancy $G_{k}(u, w_{s})$, we minimize an upper bound on the pointwise difference between $h_{u}$ and $h_{w_{s}}$, that is the difference between the unfairness certificate we choose with weigths $u$ and the one we would have chosen if $Pr(X|S=s)=Pr(X|S\neq s)$. Therefore, \textbf{mdfa} solves:

\begin{equation}
\label{eq: risk2}
\begin{small}
\min_{\phi, u}\displaystyle\sum_{i}u_{i}(x_{i})l(\phi(x_{i}), a_{i}y_{i}) + Reg(c) + \widehat{G_{k}}(u, a)
\end{small}
\end{equation}
In our implementation, the feature representation $\phi$ is learned via a neural network that is then shared with both tasks of minimizing the re-weighted certifying risk and the empirical counterpart $\widehat{G_{k}}(u, s)$ of the maximum mean discrepancy between $Pr(X|S=s)$ and $Pr(X|S\neq s)$. The following result bounds above the bias in \textbf{mdfa}'s estimate of $\gamma$: 

\begin{thm}
	\label{thm: corr1}
    Let $\epsilon >0$ and $\eta\in(0,1)$. 
	Suppose that $\mathbb{C}$ is a concept class of VC dimension $d<\infty$. Solving for Eq. \eqref{eq: risk2} finds a $\gamma -\epsilon$-unfairness certificate if $f$ is $\gamma-$ unfair and at least $O\left(\frac{1}{\epsilon^{2}}\log(d) \log\left(\frac{1}{\eta}\right)\right)$ samples are queried. 
\end{thm}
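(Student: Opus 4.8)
The plan is to build on the balanced-data guarantee of Theorem~\ref{thm: al} together with the stability bound of Lemma~\ref{lem: 3}, decomposing the error of the certificate returned by Eq.~\eqref{eq: risk2} into three sources that I would each bound by $O(\epsilon)$: the statistical error of empirical risk minimization over $\mathbb{C}$, the finite-sample error in estimating the maximum mean discrepancy, and the reweighting bias incurred by using the learned weights $u$ in place of the ideal importance weights $w_{s}$. As a starting point I would observe that if the exact weights $w_{s}$ were available, the reweighted conditionals $Pr_{w}(X|S=s)$ and $Pr_{w}(X|S\neq s)$ coincide, the data is balanced, and Lemma~\ref{lem: 1} applies verbatim; the minimizer of the reweighted risk Eq.~\eqref{eq: risk1} under $w_{s}$, call it $h_{w_{s}}$, is the idealized target whose label agreement $Pr[rSY=c]$ certifies unfairness at level $\gamma$.

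The algorithm, however, solves Eq.~\eqref{eq: risk2} with learned weights $u$, yielding $h_{u}$. By Lemma~\ref{lem: 3}, $|h_{u}(x)-h_{w_{s}}(x)|\leq \kappa^{2}\sigma\sqrt{cond(k)}\,G_{k}(u,w_{s})/\lambda_{c}$ uniformly in $x$. Since Eq.~\eqref{eq: risk2} penalizes the empirical discrepancy $\widehat{G_{k}}(u,s)$ and $G_{k}(w_{s},w_{s})=0$ (the ideal weights exactly balance the marginals), the minimizer drives $G_{k}(u,w_{s})$ toward zero, making this pointwise gap $O(\epsilon)$. I would then push this closeness through the soft threshold $g$ of the footnote: because $c(x)=g(h(x))$ and $g$ alters its output only inside the band $\{|h|<\tau/2\}$, a uniformly small perturbation of $h$ changes the agreement $Pr[rSY=c]$, and hence via Lemma~\ref{lem: 1} the certified $\gamma$, by at most $O(\epsilon)$.

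For the statistical term I would replace the cardinality bound $\log|\mathcal{C}|$ used in Theorem~\ref{thm: al} by a VC-based uniform convergence argument: since $\mathbb{C}$ has VC dimension $d<\infty$, Sauer--Shelah together with standard uniform-convergence estimates shows that $O\!\left(\frac{1}{\epsilon^{2}}\log(d)\log(1/\eta)\right)$ samples make the empirical agreement in Eq.~\eqref{eq: risk2} within $\epsilon$ of its population value uniformly over $\mathbb{C}$ with probability $1-\eta$, and the same sample count controls the deviation of $\widehat{G_{k}}$ from $G_{k}$. Collecting the three $O(\epsilon)$ contributions and rescaling the constants then yields that the triple returned is a $\gamma-\epsilon$ unfairness certificate whenever $f$ is $\gamma$-unfair.

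The main obstacle will be the reweighting-bias step, namely converting the real-valued stability bound of Lemma~\ref{lem: 3} into a bound on the \emph{discrete} certificate quality measured by Lemma~\ref{lem: 1}. This is delicate because Lemma~\ref{lem: 1} is stated for exactly balanced data and for the hard indicator $c$, whereas $h_{u}$ balances the marginals only approximately and $c=g(h)$ is a randomized threshold. Bounding the probability mass of the ambiguous band $\{|h|<\tau/2\}$, and verifying that the sampling, MMD-estimation, and reweighting error budgets genuinely compose additively into a single $\epsilon$ rather than compounding, is the crux of the argument.
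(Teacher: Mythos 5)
Your proposal follows essentially the same route as the paper's proof: the paper likewise anchors the argument at the ideal importance weights $w_{s}$, uses the stability bound of Lemma~\ref{lem: 3} together with a concentration bound on $\widehat{G_{k}}(u,w_{s})$ to control $\lVert h_{u}-h_{w_{s}}\rVert$, converts that real-valued gap into a bound on the disagreement of the randomized-threshold certificates (its Lemma~\ref{lem: 5}, precisely the ``crux'' you single out), and then composes this with the agnostic-learning guarantee of Theorem~\ref{thm: al} to get $|\hat{\gamma}_{u}-\gamma_{w}|\leq O(\epsilon)$. Your decomposition into reweighting bias, MMD estimation error, and statistical error, and your handling of the ambiguous band $\{|h|<\tau/2\}$, match the paper's chain of lemmas, so the approach is the same.
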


\subsection{Worst-Case Violation}
Solving the empirical minimization Eq. \eqref{eq: risk2} allows certifying whether any black box classifier is multi-differential fair, but the solution of Eq. \eqref{eq: risk2} does not distinguish a large sub-population $S$ with low value of $\delta$ from a smaller sub-population with larger value of $\delta$. For example, consider two sub-populations of same size $G_{0}$ and $G_{\delta}$ for $\delta >0$. Assume that there is no violation of multi-differential fairness on $G_{0}$, but a $\delta-$ violation on $G_{\delta}$. The risk minimization Eq. \ref{eq: risk2} will pick indifferently $G_{\delta}$ and $G_{\delta}\cup G_{0}$ as unfairness certificates, although $G_{0}$ mixes the violation $G_{\delta}$ with a sub-population without any violation of differential fairness.

\paragraph{Worst-Case Violation Algorithm (WVA)}
At issue in the previous example is that for the sub-population $G_{0}$, choosing $c=1$ or $c=-1$ will lead to the same empirical risk Eq. \ref{eq: risk2}. To force $c(x)=-1$ for $x\in G_{0}$,  our approach is to put a slightly larger weight on samples whenever $s_{i}\neq y_{i}$. Now the empirical risk is smaller for $c=-1$ wherever there is no violation of multi differential fairness.  More generally, our worst-violation algorithm \ref{algo: 3} iteratively increases by $1 + \xi t$ the weight on samples whenever $s_{i}\neq y_{i}$, where $\xi > 0$.  At iteration $t$, the solution $c_{t}$ of the empirical risk minimization \eqref{eq: risk1} identifies a sub-population $G_{t}=\{x| c_{t}(x)=1\}$ with a $\delta(c_{t})-$ violation of differential fairness, with $\delta \geq \ln((1 - h(\xi t))/h(\xi t))$, where $h$ is an increasing function. The Algorithm \ref{algo: 3} terminates whenever either $|G_{t}|\leq \alpha$. At the second to the last iteration $T$, theorem \ref{thm: algo3_ana} guarantees that Algorithm \ref{algo: 3} will identify a sub-population $G_{T}$ with a $\delta_{T}$-multi differential fairness violation and  $\delta_{T}$ asymptotically close to $\delta_{m}$.

\begin{thm}
	\label{thm: algo3_ana}
	Suppose $\xi > 0, \epsilon >0, \eta\in (0, 1)$ and $\mathbb{C}\subset 2^{\mathcal{X}}$ is $\alpha-$strong. Suppose that the classifier $f$ has been certified with $\gamma$-multi-differential unfairness for $y\in\{-1, 1\}$. Denote $\delta_{m}$ the worst-case violation of multi differential fairness for $\mathbb{C}$ as defined in \eqref{eq: wvio}. With probability $1-\eta$, with $O\left(\frac{1}{\epsilon^{2}}\log(|C|) \log\left(\frac{4}{\eta}\right)\right)$ samples and after $O\left(\frac{4(\gamma+\alpha)}{2\gamma + 3\alpha}\frac{2(4\gamma - 2\rho(y) + 1)}{\xi}\right)$ iterations, Algorithm \ref{algo: 3} learns $c\in \mathbb{C}$ such that 
	\begin{equation}
	\left|\ln\left(\frac{Pr[Y=y|S=s, c(x)=1]}{Pr[Y=y|S\neq s, c(x)=1]} \right) - \delta_{m}\right| \leq \epsilon.
	\end{equation}
\end{thm}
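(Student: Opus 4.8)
The plan is to decompose the analysis of Algorithm~\ref{algo: 3} into a per-iteration learning guarantee, a rearrangement characterization of the worst-case violator, and a stopping-time count. I would work in the balanced regime, which by Lemma~\ref{lem: 3} and Theorem~\ref{thm: corr1} we may assume after absorbing the maximum-mean-discrepancy bias into the tolerance $\epsilon$. Write $p(x)=Pr[S=s\mid Y=y,x]$ for the pointwise probability that the sensitive attribute coincides with the certificate label. For any $G$, Bayes' rule together with the balanced identity $Pr[S=s\mid G]=1/2$ gives $\ln\frac{Pr[Y=y\mid S=s,G]}{Pr[Y=y\mid S\neq s,G]}=\ln\frac{\bar p_{G}}{1-\bar p_{G}}$, where $\bar p_{G}=Pr[S=s\mid Y=y,G]$ is the $\{Y=y\}$-conditional average of $p$ over $G$. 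Since $t\mapsto\ln\frac{t}{1-t}$ is increasing, maximizing the violation is equivalent to maximizing $\bar p_{G}$, and this is the object the algorithm must track.

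First I would characterize $\delta_{m}$. Maximizing $\bar p_{G}$ over $\alpha$-strong $G$ -- i.e.\ subject to $Pr[Y=y,G]\geq\alpha$ -- is a Neyman--Pearson problem: to make the conditional average of $p$ as large as possible on a set of mass at least $\alpha$, one keeps the largest values of $p$, so the extremizer is a super-level set $G^{\star}=\{x:\,p(x)>\tau^{\star}\}$ with $Pr[Y=y,G^{\star}]=\alpha$, and $\delta_{m}=\ln\frac{\bar p_{G^{\star}}}{1-\bar p_{G^{\star}}}$. I would then show that the weighting scheme implements exactly this thresholding: upweighting every sample with $s_{i}\neq y_{i}$ by $1+\xi t$ shifts the Bayes-optimal decision of the convex surrogate in Eq.~\eqref{eq: risk1} so that $c_{t}(x)=1$ iff $p(x)$ exceeds $\theta_{t}=\frac{1+\xi t}{2+\xi t}$; hence $G_{t}$ is the best $\mathbb{C}$-approximation to $\{p>\theta_{t}\}$ and $\delta(c_{t})\geq\ln\frac{\theta_{t}}{1-\theta_{t}}=\ln(1+\xi t)$, recovering the monotone per-iteration bound. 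As $\theta_{t}$ rises, $G_{t}$ shrinks and $\bar p_{G_{t}}$ -- hence $\delta(c_{t})$ -- increases toward $\delta_{m}$.

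Next I would analyze termination. The algorithm halts once $|G_{t}|\leq\alpha$; at the second-to-last iterate $T$ the mass $Pr[Y=y,G_{T}]$ is still above $\alpha$, so $G_{T}$ is a legitimate $\alpha$-strong certificate, while its threshold $\theta_{T}$ differs from $\tau^{\star}$ by at most one weight increment, i.e.\ by $O(\xi)$. Combining this discretization error with the agnostic-learning error of Theorem~\ref{thm: corr1} yields $|\delta_{T}-\delta_{m}|\leq\epsilon$ for a suitably small step and the stated sample size. To count iterations, I would track $Pr[Y=y,G_{t}]$ as $\theta_{t}$ climbs from the level of the initial certificate to $\tau^{\star}$: Lemma~\ref{lem: 1} identifies the initial certifiable accuracy $Pr[rSY=c]$ with $1-\rho(y)+4\gamma$, so the total range of log-odds traversed is $O(4\gamma-2\rho(y)+1)$, which divided by $\xi$ and multiplied by the mass-depletion factor $\tfrac{4(\gamma+\alpha)}{2\gamma+3\alpha}$ gives the advertised iteration count. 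Finally, because the samples are drawn once and only reweighted across iterations, a single uniform-convergence bound over $\mathbb{C}$ (finite-class term $\log|C|$, confidence $\log\frac{4}{\eta}$), as supplied by Theorem~\ref{thm: al}, makes all $T$ empirical minimizers simultaneously $\epsilon$-accurate with probability $1-\eta$, with no union-bound blow-up in $T$.

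The main obstacle is the rearrangement step combined with the restriction to the concept class $\mathbb{C}$. The clean Neyman--Pearson characterization holds for arbitrary measurable sets, but the exact super-level set $\{p>\tau^{\star}\}$ need not belong to $\mathbb{C}$, so the argument must be carried out for the best in-class approximant, and one must show that its induced threshold still converges to $\tau^{\star}$ as $\xi t$ grows -- that the agnostic-learning error does not prevent $G_{t}$ from isolating the top values of $p$. A secondary, purely quantitative difficulty is translating the population threshold $\theta_{t}$ into the precise iteration constant: this requires carefully relating one weight increment to the induced change in $Pr[Y=y,G_{t}]$ through Lemma~\ref{lem: 1}, which is where the factor $\tfrac{4(\gamma+\alpha)}{2\gamma+3\alpha}\cdot\tfrac{2(4\gamma-2\rho(y)+1)}{\xi}$ originates.
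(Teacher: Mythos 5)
Your overall architecture (per-iteration lower bound on $\delta_t$, termination analysis, iteration count, single uniform-convergence bound) parallels the paper's, but the engine you propose for the per-iteration bound is different and, as you yourself flag, is exactly where the argument breaks. You derive the threshold $\theta_t=\frac{1+\xi t}{2+\xi t}$ from the \emph{Bayes-optimal} decision rule for the tilted weights, and then assert that $G_t$ is ``the best $\mathbb{C}$-approximation to $\{p>\theta_t\}$'' with $\delta(c_t)\geq\ln(1+\xi t)$. For a restricted concept class this does not follow: the ERM over $\mathbb{C}$ minimizes an aggregate weighted risk and need not track the super-level sets of $p$ in any pointwise sense; the best in-class approximant of $\{p>\theta_t\}$ under weighted $0$--$1$ loss can select regions where $p$ is well below $\theta_t$ if that is what the class geometry forces. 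Your ``main obstacle'' paragraph names this but offers no mechanism to close it, so the central inequality of the proof is unproven. Note also that the quantitative bound you would obtain, $\delta(c_t)\geq\ln(1+\xi t)$, does not match the paper's $e^{\delta_t}\geq(1-h(\xi t))/h(\xi t)$ with $h(\xi t)=\frac{4\gamma+1-2\rho_+}{\xi\alpha t}$, which depends on $\gamma$, $\rho$ and $\alpha$ --- a sign that the two arguments are not interchangeable.

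The paper avoids the Bayes characterization entirely by exploiting only in-class optimality of $c_t$ against two explicit competitors. Comparing the weighted risk of $c_t$ with the constant classifier $c_{-1}\equiv-1$ yields $Pr_u[SY=c_t\mid c_t=1,Y=y]\geq 1-h(\xi t)$, because any advantage of $c_t$ over $c_{-1}$ must be paid for on the upweighted samples inside $\{c_t=1\}$; this gives the monotone lower bound on $\delta_t$ with no reference to super-level sets. Then, at termination, comparing the risk of $c_t$ with the worst-case violator $c_m\in\mathbb{C}$ shows $Pr_u[SY=c_t]\geq Pr_u[SY=c_m]=4\gamma+1-\rho_+$, and since $\gamma$-unfairness caps this quantity at the same value, equality holds; combined with $Pr_u[c_t=1,Y=y]=\alpha$ this forces $\delta_t=\delta_m$ exactly at the population level, with $\epsilon$ arising only from finite-sample estimation (your version would instead leave a residual $O(\xi)$ discretization error that the theorem statement does not allow for). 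If you want to salvage your route, you would need an additional structural assumption tying the in-class minimizer to the level sets of $p$; otherwise you should replace the Neyman--Pearson step with the two risk comparisons above.
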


\begin{algorithm}[t]
	\caption{Worst Violation Algorithm (WVA)}
	\label{algo: 3}
	\textbf{Input:}  $\{((x_{i}, a_{i}), y_{i})\}_{i=1}^{m}$, $\mathbb{C}\subset 2^{|\mathcal{X}|}$, $\xi$, $\alpha$, weights $u$, $y\in \{-1, 1\}$
	\begin{algorithmic}[1]
		\STATE  $\alpha_{0} =1$, $u_{it}=u$ 
		\WHILE {$\alpha_{t} > \alpha$}  
		\STATE $c_{t} = argmin_{c\in \mathbb{C}}\frac{1}{m}\displaystyle\sum_{i=1}^{m} u_{it}(x)l(a_{i}y_{i}, c(x_{i})) + \lambda_{c}Reg(c)$
		
		\STATE $\left. \hat{\delta}_{t}\gets \displaystyle\sum_{\substack{i=1,  c(x_{i})=1 \\ y_{i}=y, a_{i}=1}}^{m} u_{i}(x_{i})\middle/ \displaystyle\sum_{\substack{i=1,  c(x_{i})=1 \\ y_{i}=y, a_{i}=-1}}^{m} u_{i}(x_{i})\right.$
		
		\STATE $\left. \hat{\alpha_{t}} \gets\gets \displaystyle\sum_{\substack{i=1,  c(x_{i})=1, y_{i}=1}}^{m} u_{i}(x_{i})\middle / \displaystyle\sum_{i=1}^{m} u_{i}(x_{i})\right.$
		
		\STATE $t\gets t +1$, $u_{it}\gets u_{it} + u_{i}\xi$ if $a_{i}\neq y_{i}$ and $y_{i}=y$.
		
		\ENDWHILE   
		\STATE{\bfseries Return} $\ln(\delta_{t})$.
	\end{algorithmic}
\end{algorithm}

\subsection{\textbf{mdfa} Auditor}
Putting the building blocks together allows us to design a fairness diagnostic tool  \textbf{mdfa} that identifies efficiently the most severe violation of differential unfairness.  

\paragraph{Architecture}
Inputs are a dataset with a classifier's outcomes (labels $\pm 1$) along with auditing features. \textbf{mfda} first uses a neural network with four fully connected layers of $8$ neurons to express the weights $u$ as a function of the features $x$ and minimizes the maximum-mean discrepancy function $\hat{G_{k}}(u, s)$. The outputs of the last hidden layer in the neural network are used as a feature mapping $\phi$ and serve along the estimated weights $u$ as an input to the empirical minimization Eq. \eqref{eq: risk2}, which outputs a certificate $(c, y, s)\in \mathbb{C}\times \{-1, 1\}\times \mathcal{S}$ of unfairness. The weights are re-adjusted until  the identified worst-case violation has a size smaller than $\alpha$. When terminating, \textbf{mdfa} outputs an estimate of the most-harmed sub-population $c_{m}$ along with an estimate of $\delta_{m}$.

\paragraph{Cross-Validation}
The auditor chooses the minimum size $\alpha$ of the worst-case violation they would like to identify.  The advantage of our approach is that, although we do not have ground truth for unfair treatment, we can propose heuristics to cross-validate our choice of regularization parameters used in Eq. \eqref{eq: risk2}. First, we split $70\%/30\%$ the input data into a train and test set. Using a $5-$fold cross-validation, \textbf{mdfa} is trained on four folds and a grid search looks for regularization parameters that minimize the maximum-mean-discrepancy $\hat{G_{k}}(u, s)$ and the empirical risk on the fifth fold. Once \textbf{mdfa} is trained, the estimated $\delta_{m}$ and the corresponding characteristics of the most-harmed sub-population are computed on the test set.  

\section{Experimental Results}

\subsection{Synthetic Data}
A synthetic data is constructed by drawing independently two features $X_{1}$ and $X_{2}$ from two normal distributions $N(0, 1)$. We consider a binary protected attribute  $\mathcal{S}=\{-1, 1\}$ drawn from a Bernouilli distribution with $S=1$ with probability $w(x)=\frac{e^{\mu * (x_{1}- x_{2}))^{2}}}{1 + e^{\mu * (x_{1}+ x_{2}))^{2}}}$. $\mu$ is the imbalance factor. $\mu=0$ means that the data is perfectly balanced. 

The data is labeled according to the sign of $(X_{1} + X_{2} + e) ^{3}$, where is $e$ is a noise drawn from $N(0, 0.2)$. The audited classifier $f$ is a logistic regression classifier that is altered to generate instances of differential unfairness. For $x^{2}_{1} + x^{2}_{2} \leq 1$, if $S=-1$, the classifier's outcomes $Y$ is changed from $-1$ to $1$ with probability $1 -\nu\in (0, 1]$; if $S=1$, all $Y=-1$ are changed to $Y=1$. For $\nu=0$, the audited classifier is differentially fair; however, as $\nu$ increases, in the half circle $\{(x_{1}, x_{2})|x^{2}_{1} + x^{2}_{2} \leq 1 \mbox{ and } y=-1\}$ there is a fraction $\nu$ of individuals with $S=1$ who are not treated similarly as individuals with $S=-1$. 
%\bigskip
\paragraph{Results}
First, we test whether Algorithm \ref{algo: 3} identifies correctly the worst-case violation that occurs in the sub-space $\{(x_{1}, x_{2})|x^{2}_{1} + x^{2}_{2} \leq 1 \mbox{ and } y=-1\}$. \textbf{mdfa} is trained using a support vector machine (RBF kernel) on a unbalanced data ($\mu=0.2$) with value of $\delta_{m}$ varying from $0$ to $3.0$. Figure \ref{fig: 1a} plots the estimated $\hat{\delta_{m}}$ against the true one $\delta_{m}$ and shows that \textbf{mdfa}'s estimate $\hat{\delta_{m}}$ is unbiased.  Figure \ref{fig: 1b} shows that at each iteration of the Algorithm \ref{algo: 3}, the estimated worst-case violation $\hat{\delta}_{m}$ progresses toward the true value $\delta_{m}$. Secondly,  we compare our balancing approach $MMD$ to alternative re-balancing approaches: (i) uniform weights ($UW$) with $u(x)=1/m$ for all $x$ and (ii) importance sampling ($IS$)  with exact weights $w(x)$.  $UW$ applies \textbf{mdfa} without rebalancing. $IS$ uses oracle access to the importance sampling weights $w$, since they are known in this synthetic experiment. Figure \ref{fig: 1c} plots the bias $\hat{\gamma} - \gamma$ for each unfairness certificate obtained by \textbf{mdfa} with varying values of the imbalance factor $\mu$. It shows that minimizing the maximum-mean discrepancy function is the only method that generates unbiased certificates regardless of  data imbalance. Bias in estimates obtained with $UW$ confirms that absent of a re-weighting scheme, \textbf{mdfa} cannot disentangle the information related to $S$ leaked by the features $x$ from the one leaked by the classifier's outcomes $y$.   Using importance sampling weights  ($IS$) directly does not perform well: this confirms previous observations in the literature that in finite sample, the variance of the importance sample weights can be detrimental to a re-balancing approach. Lastly, Figure \ref{fig: 1d} shows that \textbf{mdfa}'s estimates of $\delta_{m}$ are robust to diverse classes of classifiers, including support vector machines with non-linear $SVM-RBF$ or linear $SVM-Lin$ kernels and random forest $RF$.

	\begin{figure}[t!]
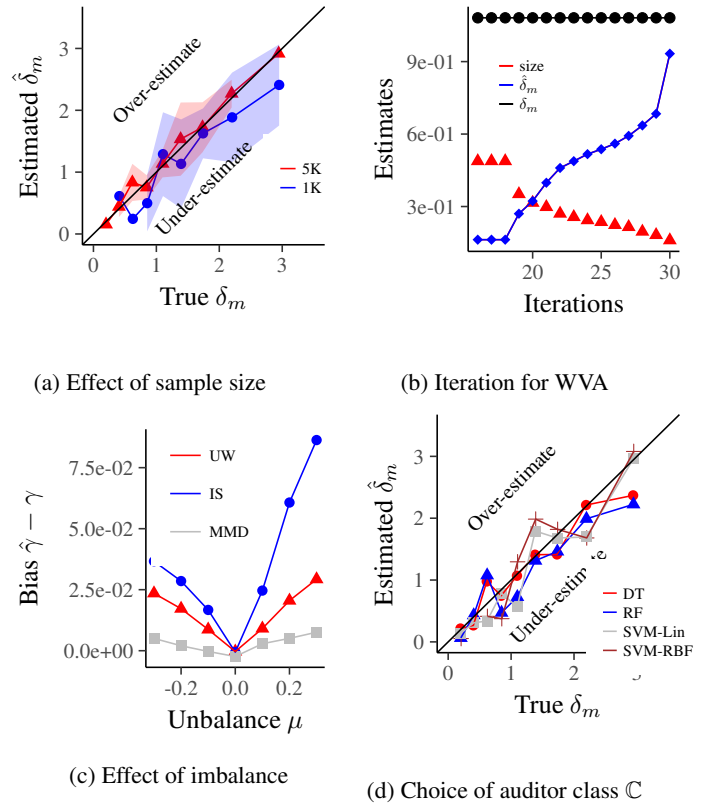

		
	    	\begin{subfigure}{0.45\linewidth}
	
		% Created by tikzDevice version 0.12 on 2019-02-24 22:26:47
% !TEX encoding = UTF-8 Unicode
\begin{tikzpicture}[x=1pt,y=1pt]
\definecolor{fillColor}{RGB}{255,255,255}
\path[use as bounding box,fill=fillColor,fill opacity=0.00] (0,0) rectangle (126.47,126.47);
\begin{scope}
\path[clip] (  0.00,  0.00) rectangle (126.47,126.47);
\definecolor{drawColor}{RGB}{255,255,255}
\definecolor{fillColor}{RGB}{255,255,255}

\path[draw=drawColor,line width= 0.6pt,line join=round,line cap=round,fill=fillColor] (  0.00,  0.00) rectangle (126.47,126.47);
\end{scope}
\begin{scope}
\path[clip] ( 29.34, 30.85) rectangle (120.97,120.97);
\definecolor{fillColor}{RGB}{255,0,0}

\path[fill=fillColor] ( 38.26, 41.61) --
	( 40.90, 37.03) --
	( 35.62, 37.03) --
	cycle;

\path[fill=fillColor] ( 43.13, 48.31) --
	( 45.78, 43.74) --
	( 40.49, 43.74) --
	cycle;

\path[fill=fillColor] ( 48.22, 57.51) --
	( 50.86, 52.93) --
	( 45.58, 52.93) --
	cycle;

\path[fill=fillColor] ( 53.65, 55.63) --
	( 56.29, 51.06) --
	( 51.01, 51.06) --
	cycle;

\path[fill=fillColor] ( 59.63, 64.61) --
	( 62.28, 60.03) --
	( 56.99, 60.03) --
	cycle;

\path[fill=fillColor] ( 66.48, 73.89) --
	( 69.12, 69.32) --
	( 63.84, 69.32) --
	cycle;

\path[fill=fillColor] ( 74.77, 78.38) --
	( 77.41, 73.81) --
	( 72.13, 73.81) --
	cycle;

\path[fill=fillColor] ( 85.78, 91.13) --
	( 88.42, 86.55) --
	( 83.14, 86.55) --
	cycle;

\path[fill=fillColor] (103.57,106.29) --
	(106.21,101.71) --
	(100.92,101.71) --
	cycle;
\definecolor{fillColor}{RGB}{0,0,255}

\path[fill=fillColor] ( 43.31, 49.28) circle (  1.96);

\path[fill=fillColor] ( 48.40, 40.66) circle (  1.96);

\path[fill=fillColor] ( 53.83, 46.60) circle (  1.96);

\path[fill=fillColor] ( 59.81, 65.15) circle (  1.96);

\path[fill=fillColor] ( 66.66, 61.47) circle (  1.96);

\path[fill=fillColor] ( 74.95, 73.02) circle (  1.96);

\path[fill=fillColor] ( 85.96, 79.08) circle (  1.96);

\path[fill=fillColor] (103.75, 91.36) circle (  1.96);
\definecolor{drawColor}{RGB}{0,0,255}

\path[draw=drawColor,line width= 0.6pt,line join=round] ( 43.31, 49.28) --
	( 48.40, 40.66) --
	( 53.83, 46.60) --
	( 59.81, 65.15) --
	( 66.66, 61.47) --
	( 74.95, 73.02) --
	( 85.96, 79.08) --
	(103.75, 91.36);
\definecolor{drawColor}{RGB}{255,0,0}

\path[draw=drawColor,line width= 0.6pt,line join=round] ( 38.26, 38.56) --
	( 43.13, 45.26) --
	( 48.22, 54.45) --
	( 53.65, 52.58) --
	( 59.63, 61.56) --
	( 66.48, 70.84) --
	( 74.77, 75.33) --
	( 85.78, 88.07) --
	(103.57,103.24);
\definecolor{fillColor}{RGB}{0,0,255}

\path[fill=fillColor,fill opacity=0.20] ( 53.83, 57.30) --
	( 59.81, 81.02) --
	( 66.66, 78.31) --
	( 74.95, 82.47) --
	( 85.96, 96.03) --
	(103.75,106.72) --
	(103.75, 75.99) --
	( 85.96, 62.13) --
	( 74.95, 63.57) --
	( 66.66, 44.64) --
	( 59.81, 49.28) --
	( 53.83, 35.91) --
	cycle;
\definecolor{fillColor}{RGB}{255,0,0}

\path[fill=fillColor,fill opacity=0.20] ( 43.13, 49.20) --
	( 48.22, 61.49) --
	( 53.65, 57.17) --
	( 59.63, 66.77) --
	( 66.48, 84.71) --
	( 74.77, 84.80) --
	( 85.78, 93.32) --
	( 85.78, 82.83) --
	( 74.77, 65.87) --
	( 66.48, 56.97) --
	( 59.63, 56.35) --
	( 53.65, 48.00) --
	( 48.22, 47.42) --
	( 43.13, 41.33) --
	cycle;
\definecolor{drawColor}{RGB}{0,0,0}

\path[draw=drawColor,line width= 0.6pt,line join=round] ( 29.34, 30.85) -- (120.97,120.97);

\node[text=drawColor,rotate= 45.00,anchor=base,inner sep=0pt, outer sep=0pt, scale=  0.78] at ( 59.21, 91.56) {Over-estimate};

\node[text=drawColor,rotate= 45.00,anchor=base,inner sep=0pt, outer sep=0pt, scale=  0.78] at ( 77.06, 50.59) {Under-estimate};
\end{scope}
\begin{scope}
\path[clip] (  0.00,  0.00) rectangle (126.47,126.47);
\definecolor{drawColor}{RGB}{0,0,0}

\path[draw=drawColor,line width= 0.6pt,line join=round] ( 29.34, 30.85) --
	( 29.34,120.97);
\end{scope}
\begin{scope}
\path[clip] (  0.00,  0.00) rectangle (126.47,126.47);
\definecolor{drawColor}{gray}{0.30}

\node[text=drawColor,anchor=base east,inner sep=0pt, outer sep=0pt, scale=  0.80] at ( 24.39, 32.19) {0};

\node[text=drawColor,anchor=base east,inner sep=0pt, outer sep=0pt, scale=  0.80] at ( 24.39, 55.60) {1};

\node[text=drawColor,anchor=base east,inner sep=0pt, outer sep=0pt, scale=  0.80] at ( 24.39, 79.01) {2};

\node[text=drawColor,anchor=base east,inner sep=0pt, outer sep=0pt, scale=  0.80] at ( 24.39,102.42) {3};
\end{scope}
\begin{scope}
\path[clip] (  0.00,  0.00) rectangle (126.47,126.47);
\definecolor{drawColor}{gray}{0.20}

\path[draw=drawColor,line width= 0.6pt,line join=round] ( 26.59, 34.94) --
	( 29.34, 34.94);

\path[draw=drawColor,line width= 0.6pt,line join=round] ( 26.59, 58.35) --
	( 29.34, 58.35);

\path[draw=drawColor,line width= 0.6pt,line join=round] ( 26.59, 81.76) --
	( 29.34, 81.76);

\path[draw=drawColor,line width= 0.6pt,line join=round] ( 26.59,105.17) --
	( 29.34,105.17);
\end{scope}
\begin{scope}
\path[clip] (  0.00,  0.00) rectangle (126.47,126.47);
\definecolor{drawColor}{RGB}{0,0,0}

\path[draw=drawColor,line width= 0.6pt,line join=round] ( 29.34, 30.85) --
	(120.97, 30.85);
\end{scope}
\begin{scope}
\path[clip] (  0.00,  0.00) rectangle (126.47,126.47);
\definecolor{drawColor}{gray}{0.20}

\path[draw=drawColor,line width= 0.6pt,line join=round] ( 33.50, 28.10) --
	( 33.50, 30.85);

\path[draw=drawColor,line width= 0.6pt,line join=round] ( 57.30, 28.10) --
	( 57.30, 30.85);

\path[draw=drawColor,line width= 0.6pt,line join=round] ( 81.10, 28.10) --
	( 81.10, 30.85);

\path[draw=drawColor,line width= 0.6pt,line join=round] (104.91, 28.10) --
	(104.91, 30.85);
\end{scope}
\begin{scope}
\path[clip] (  0.00,  0.00) rectangle (126.47,126.47);
\definecolor{drawColor}{gray}{0.30}

\node[text=drawColor,anchor=base,inner sep=0pt, outer sep=0pt, scale=  0.80] at ( 33.50, 20.39) {0};

\node[text=drawColor,anchor=base,inner sep=0pt, outer sep=0pt, scale=  0.80] at ( 57.30, 20.39) {1};

\node[text=drawColor,anchor=base,inner sep=0pt, outer sep=0pt, scale=  0.80] at ( 81.10, 20.39) {2};

\node[text=drawColor,anchor=base,inner sep=0pt, outer sep=0pt, scale=  0.80] at (104.91, 20.39) {3};
\end{scope}
\begin{scope}
\path[clip] (  0.00,  0.00) rectangle (126.47,126.47);
\definecolor{drawColor}{RGB}{0,0,0}

\node[text=drawColor,anchor=base,inner sep=0pt, outer sep=0pt, scale=  1.00] at ( 75.15,  8.00) {True $\delta_{m}$};
\end{scope}
\begin{scope}
\path[clip] (  0.00,  0.00) rectangle (126.47,126.47);
\definecolor{drawColor}{RGB}{0,0,0}

\node[text=drawColor,rotate= 90.00,anchor=base,inner sep=0pt, outer sep=0pt, scale=  1.00] at ( 12.39, 75.91) {Estimated $\hat{\delta}_{m}$};
\end{scope}
\begin{scope}
\path[clip] (  0.00,  0.00) rectangle (126.47,126.47);
\definecolor{fillColor}{RGB}{255,255,255}

\path[fill=fillColor] ( 97.77, 43.16) rectangle (125.85, 72.61);
\end{scope}
\begin{scope}
\path[clip] (  0.00,  0.00) rectangle (126.47,126.47);
\definecolor{drawColor}{RGB}{255,0,0}

\path[draw=drawColor,line width= 0.6pt,line join=round] (104.18, 59.69) -- (109.96, 59.69);
\end{scope}
\begin{scope}
\path[clip] (  0.00,  0.00) rectangle (126.47,126.47);
\definecolor{drawColor}{RGB}{0,0,255}

\path[draw=drawColor,line width= 0.6pt,line join=round] (104.18, 52.46) -- (109.96, 52.46);
\end{scope}
\begin{scope}
\path[clip] (  0.00,  0.00) rectangle (126.47,126.47);
\definecolor{drawColor}{RGB}{0,0,0}

\node[text=drawColor,anchor=base west,inner sep=0pt, outer sep=0pt, scale=  0.60] at (112.49, 57.62) {5K};
\end{scope}
\begin{scope}
\path[clip] (  0.00,  0.00) rectangle (126.47,126.47);
\definecolor{drawColor}{RGB}{0,0,0}

\node[text=drawColor,anchor=base west,inner sep=0pt, outer sep=0pt, scale=  0.60] at (112.49, 50.40) {1K};
\end{scope}
\end{tikzpicture}
	
		\caption{Effect of sample size}
		\label{fig: 1a}
		\end{subfigure}
		\begin{subfigure}{0.45\linewidth}
			\input{figure1d.tex}
			\caption{Iteration for WVA}
			\label{fig: 1b}
		\end{subfigure}
			\begin{subfigure}{0.54\linewidth}
				% Created by tikzDevice version 0.12 on 2019-02-25 07:21:56
% !TEX encoding = UTF-8 Unicode
\begin{tikzpicture}[x=1pt,y=1pt]
\definecolor{fillColor}{RGB}{255,255,255}
\path[use as bounding box,fill=fillColor,fill opacity=0.00] (0,0) rectangle (126.47,126.47);
\begin{scope}
\path[clip] (  0.00,  0.00) rectangle (126.47,126.47);
\definecolor{drawColor}{RGB}{255,255,255}
\definecolor{fillColor}{RGB}{255,255,255}

\path[draw=drawColor,line width= 0.6pt,line join=round,line cap=round,fill=fillColor] ( -0.00,  0.00) rectangle (126.47,126.47);
\end{scope}
\begin{scope}
\path[clip] ( 53.33, 30.85) rectangle (120.97,120.97);
\definecolor{fillColor}{RGB}{255,0,0}

\path[fill=fillColor] ( 56.40, 61.91) --
	( 59.05, 57.34) --
	( 53.76, 57.34) --
	cycle;

\path[fill=fillColor] ( 66.65, 56.10) --
	( 69.30, 51.52) --
	( 64.01, 51.52) --
	cycle;

\path[fill=fillColor] ( 76.90, 48.23) --
	( 79.54, 43.65) --
	( 74.26, 43.65) --
	cycle;

\path[fill=fillColor] ( 87.15, 39.88) --
	( 89.79, 35.31) --
	( 84.51, 35.31) --
	cycle;

\path[fill=fillColor] ( 97.40, 48.59) --
	(100.04, 44.01) --
	( 94.76, 44.01) --
	cycle;

\path[fill=fillColor] (107.65, 59.16) --
	(110.29, 54.58) --
	(105.01, 54.58) --
	cycle;

\path[fill=fillColor] (117.90, 67.24) --
	(120.54, 62.67) --
	(115.26, 62.67) --
	cycle;
\definecolor{fillColor}{RGB}{0,0,255}

\path[fill=fillColor] ( 56.40, 71.00) circle (  1.96);

\path[fill=fillColor] ( 66.65, 63.50) circle (  1.96);

\path[fill=fillColor] ( 76.90, 52.57) circle (  1.96);

\path[fill=fillColor] ( 87.15, 36.59) circle (  1.96);

\path[fill=fillColor] ( 97.40, 59.90) circle (  1.96);

\path[fill=fillColor] (107.65, 93.23) circle (  1.96);

\path[fill=fillColor] (117.90,116.88) circle (  1.96);
\definecolor{fillColor}{RGB}{190,190,190}

\path[fill=fillColor] ( 54.44, 39.83) --
	( 58.37, 39.83) --
	( 58.37, 43.75) --
	( 54.44, 43.75) --
	cycle;

\path[fill=fillColor] ( 64.69, 37.05) --
	( 68.62, 37.05) --
	( 68.62, 40.97) --
	( 64.69, 40.97) --
	cycle;

\path[fill=fillColor] ( 74.94, 34.90) --
	( 78.86, 34.90) --
	( 78.86, 38.83) --
	( 74.94, 38.83) --
	cycle;

\path[fill=fillColor] ( 85.19, 32.98) --
	( 89.11, 32.98) --
	( 89.11, 36.91) --
	( 85.19, 36.91) --
	cycle;

\path[fill=fillColor] ( 95.44, 37.82) --
	( 99.36, 37.82) --
	( 99.36, 41.75) --
	( 95.44, 41.75) --
	cycle;

\path[fill=fillColor] (105.69, 39.85) --
	(109.61, 39.85) --
	(109.61, 43.77) --
	(105.69, 43.77) --
	cycle;

\path[fill=fillColor] (115.94, 42.17) --
	(119.86, 42.17) --
	(119.86, 46.10) --
	(115.94, 46.10) --
	cycle;
\definecolor{drawColor}{RGB}{0,0,255}

\path[draw=drawColor,line width= 0.6pt,line join=round] ( 56.40, 71.00) --
	( 66.65, 63.50) --
	( 76.90, 52.57) --
	( 87.15, 36.59) --
	( 97.40, 59.90) --
	(107.65, 93.23) --
	(117.90,116.88);
\definecolor{drawColor}{RGB}{255,0,0}

\path[draw=drawColor,line width= 0.6pt,line join=round] ( 56.40, 58.86) --
	( 66.65, 53.04) --
	( 76.90, 45.18) --
	( 87.15, 36.83) --
	( 97.40, 45.54) --
	(107.65, 56.10) --
	(117.90, 64.19);
\definecolor{drawColor}{RGB}{190,190,190}

\path[draw=drawColor,line width= 0.6pt,line join=round] ( 56.40, 41.79) --
	( 66.65, 39.01) --
	( 76.90, 36.86) --
	( 87.15, 34.94) --
	( 97.40, 39.78) --
	(107.65, 41.81) --
	(117.90, 44.14);
\end{scope}
\begin{scope}
\path[clip] (  0.00,  0.00) rectangle (126.47,126.47);
\definecolor{drawColor}{RGB}{0,0,0}

\path[draw=drawColor,line width= 0.6pt,line join=round] ( 53.33, 30.85) --
	( 53.33,120.97);
\end{scope}
\begin{scope}
\path[clip] (  0.00,  0.00) rectangle (126.47,126.47);
\definecolor{drawColor}{gray}{0.30}

\node[text=drawColor,anchor=base east,inner sep=0pt, outer sep=0pt, scale=  0.80] at ( 48.38, 34.38) {0.0e+00};

\node[text=drawColor,anchor=base east,inner sep=0pt, outer sep=0pt, scale=  0.80] at ( 48.38, 57.48) {2.5e-02};

\node[text=drawColor,anchor=base east,inner sep=0pt, outer sep=0pt, scale=  0.80] at ( 48.38, 80.58) {5.0e-02};

\node[text=drawColor,anchor=base east,inner sep=0pt, outer sep=0pt, scale=  0.80] at ( 48.38,103.68) {7.5e-02};
\end{scope}
\begin{scope}
\path[clip] (  0.00,  0.00) rectangle (126.47,126.47);
\definecolor{drawColor}{gray}{0.20}

\path[draw=drawColor,line width= 0.6pt,line join=round] ( 50.58, 37.14) --
	( 53.33, 37.14);

\path[draw=drawColor,line width= 0.6pt,line join=round] ( 50.58, 60.24) --
	( 53.33, 60.24);

\path[draw=drawColor,line width= 0.6pt,line join=round] ( 50.58, 83.33) --
	( 53.33, 83.33);

\path[draw=drawColor,line width= 0.6pt,line join=round] ( 50.58,106.43) --
	( 53.33,106.43);
\end{scope}
\begin{scope}
\path[clip] (  0.00,  0.00) rectangle (126.47,126.47);
\definecolor{drawColor}{RGB}{0,0,0}

\path[draw=drawColor,line width= 0.6pt,line join=round] ( 53.33, 30.85) --
	(120.97, 30.85);
\end{scope}
\begin{scope}
\path[clip] (  0.00,  0.00) rectangle (126.47,126.47);
\definecolor{drawColor}{gray}{0.20}

\path[draw=drawColor,line width= 0.6pt,line join=round] ( 66.65, 28.10) --
	( 66.65, 30.85);

\path[draw=drawColor,line width= 0.6pt,line join=round] ( 87.15, 28.10) --
	( 87.15, 30.85);

\path[draw=drawColor,line width= 0.6pt,line join=round] (107.65, 28.10) --
	(107.65, 30.85);
\end{scope}
\begin{scope}
\path[clip] (  0.00,  0.00) rectangle (126.47,126.47);
\definecolor{drawColor}{gray}{0.30}

\node[text=drawColor,anchor=base,inner sep=0pt, outer sep=0pt, scale=  0.80] at ( 66.65, 20.39) {-0.2};

\node[text=drawColor,anchor=base,inner sep=0pt, outer sep=0pt, scale=  0.80] at ( 87.15, 20.39) {0.0};

\node[text=drawColor,anchor=base,inner sep=0pt, outer sep=0pt, scale=  0.80] at (107.65, 20.39) {0.2};
\end{scope}
\begin{scope}
\path[clip] (  0.00,  0.00) rectangle (126.47,126.47);
\definecolor{drawColor}{RGB}{0,0,0}

\node[text=drawColor,anchor=base,inner sep=0pt, outer sep=0pt, scale=  1.00] at ( 87.15,  8.00) {Unbalance  $\mu$};
\end{scope}
\begin{scope}
\path[clip] (  0.00,  0.00) rectangle (126.47,126.47);
\definecolor{drawColor}{RGB}{0,0,0}

\node[text=drawColor,rotate= 90.00,anchor=base,inner sep=0pt, outer sep=0pt, scale=  1.00] at ( 12.39, 75.91) {Bias $\hat{\gamma}-\gamma$};
\end{scope}
\begin{scope}
\path[clip] (  0.00,  0.00) rectangle (126.47,126.47);
\definecolor{fillColor}{RGB}{255,255,255}

\path[fill=fillColor] ( 55.39, 69.26) rectangle ( 98.62,127.62);
\end{scope}
\begin{scope}
\path[clip] (  0.00,  0.00) rectangle (126.47,126.47);
\definecolor{drawColor}{RGB}{255,0,0}

\path[draw=drawColor,line width= 0.6pt,line join=round] ( 62.53,111.09) -- ( 74.09,111.09);
\end{scope}
\begin{scope}
\path[clip] (  0.00,  0.00) rectangle (126.47,126.47);
\definecolor{drawColor}{RGB}{0,0,255}

\path[draw=drawColor,line width= 0.6pt,line join=round] ( 62.53, 96.63) -- ( 74.09, 96.63);
\end{scope}
\begin{scope}
\path[clip] (  0.00,  0.00) rectangle (126.47,126.47);
\definecolor{drawColor}{RGB}{190,190,190}

\path[draw=drawColor,line width= 0.6pt,line join=round] ( 62.53, 82.18) -- ( 74.09, 82.18);
\end{scope}
\begin{scope}
\path[clip] (  0.00,  0.00) rectangle (126.47,126.47);
\definecolor{drawColor}{RGB}{0,0,0}

\node[text=drawColor,anchor=base west,inner sep=0pt, outer sep=0pt, scale=  0.60] at ( 77.35,109.02) {UW};
\end{scope}
\begin{scope}
\path[clip] (  0.00,  0.00) rectangle (126.47,126.47);
\definecolor{drawColor}{RGB}{0,0,0}

\node[text=drawColor,anchor=base west,inner sep=0pt, outer sep=0pt, scale=  0.60] at ( 77.35, 94.57) {IS};
\end{scope}
\begin{scope}
\path[clip] (  0.00,  0.00) rectangle (126.47,126.47);
\definecolor{drawColor}{RGB}{0,0,0}

\node[text=drawColor,anchor=base west,inner sep=0pt, outer sep=0pt, scale=  0.60] at ( 77.35, 80.11) {MMD};
\end{scope}
\end{tikzpicture}
				\caption{Effect of imbalance}
				\label{fig: 1c}
			\end{subfigure}
			\begin{subfigure}{0.45\linewidth}
				%The code to input the plot is extremely simple
				\input{figure1b.tex}
				%Captions and Labels can be used since this is a figure environment
				\caption{Choice of auditor class $\mathbb{C}$}
				\label{fig: 1d}
			\end{subfigure}
			\caption{Performance of \textbf{mdfa} on synthetic data. If not precised otherwise, \textbf{mdfa} is trained with a support vector machine with $5K$ samples and imbalance factor $\mu=0.2$}
	\end{figure}

\subsection{Case Study: COMPAS}
We  apply our method to the COMPAS algorithm, 
widely used to assess the likelihood of a defendant to become a recidivist (\cite{ProPublica2016}). The research question is whether without knowledge of the design of COMPAS, \textbf{mdfa} can identify group of individuals that could argue for a disparate treatment. The data collected by ProPublica in Broward County from 2013 to 2015  contains $7K$ individuals along with a risk score and a risk category assigned by COMPAS. We transform the risk category into a binary variable equal to $1$ for individuals assigned in the high risk category (risk score between $8$ and $10$). The data provides us with information related to the historical criminal history, misdemeanors, gender, age and race of each individual.  

\begin{table}[h!]
\begin{footnotesize}
\centering
	\begin{tabular}{l|l|l}
		Features & Race    & Gender  \\
		\hline
		\hline
		I, II, III, IV, V         & 0.023761 &   $-0.0001$        \\
		       & $\scriptsize{(0.004312)}$ &   $\scriptsize{(0.0006)}$        \\
		I, II, III         & 0.023599&     $-0.000037 $       \\
	        & $\scriptsize{( 0.00384)}$ & $\scriptsize{(0.0002)}$         \\
		I, II          & $0.027382 $ & $-0.00005$         \\
		 & $\scriptsize{(0.019383)}$ & $\scriptsize{(0.0001)}$         \\
	\end{tabular}
	\caption{Certifying the lack of differential fairness in COMPAS risk classification. Features are as follows: I: count of prior felonies; II: degree of current charge (criminal vs non-criminal); III: age; IV: count of juvenile prior felonies; V: count of juvenile prior misdemeanors. () indicates standard deviations.}
	\label{tab: 1}
\end{footnotesize}
\end{table}

\paragraph{Certifying the Lack of Differential Fairness}
We assess whether \textbf{mdfa} finds unfairness certificates by running only one iteration of algorithm \ref{algo: 3} on $100$ different $70/30\%$ train/test splits. The assessment is made for two binary sensitive attributes:  whether an individual self-identifies as Afro-American; and, whether an individual self-identifies as Male. Table \ref{tab: 1} reports the unfairness level $\gamma$ for each of those sub-groups: a value significantly larger than zero indicates the existence of sub-populations where similar individuals are treated differently by COMPAS.  In the first row of Table \ref{tab: 1}, we use prior felonies (I), degree of current charges (II), age (III), juvenile felonies (IV) and misdemeanors (V) as auditing features. We find a significant level of differential unfairness in the COMPAS risk classification ($\gamma =0.024\pm 004$) if the binary sensitive attribute is race. On the other hand, we do not find any evidence of differential unfairness when the sensitive attribute is gender. The results are robust to different choices for the auditing features, although standard deviations are higher when using only prior felonies (I) and degree of current charges (II).

\paragraph{Worst Violations}
We run \textbf{mdfa} on $100$ different $70/30\%$ train/test splits and report  average value of auditing features and recidivism risk for the whole population and the worst-case subpopulation in Table \ref{tab: 2}. The first two columns show that the distribution of features in the whole population is disperse and differs between African American (AA) and Other. 
%non-African American (Other). 
This is due to the data imbalance issue (c.f. Section 3). The probability of being classified as high risk is $0.14$ for African-American, thereby $2.7$ times higher than for non-African American. However, it is unclear whether that difference could be explained either by the distribution imbalance or by the classifier's disparate treatment. The two last columns in Table \ref{tab: 2} show that in the sub-population ``violation" extracted by \textbf{mdfa}, the distribution of features is narrower and similar for African-American and  non-African American: the sub-population is made of individuals with little criminal and misdemeanor history. However, African American are still three times more likely to be classified as high risk. A policy implication of \textbf{mdfa} findings is that a judge using COMPAS may discount its assessment for African-American with little criminal history.

\begin{table} [t]
	\centering  
	\begin{footnotesize}
	\begin{tabular}{c|cc||cc} 
		Variable & \multicolumn{2}{c}{Population} & \multicolumn{2}{c}{ Violation} \\
		& AA & Other & AA& Other \\
		\hline 
		\hline 
		Prior Felonies & 4.44 & 2.46 & 0.79 & 0.67 \\
		 &  $\scriptsize{(5.58)}$ & $\scriptsize{(3.76)}$ & $\scriptsize{(0.24)}$ & $\scriptsize{(0.17)}$ \\
		Charge Degree & 0.31  & 0.4  & 0.74  & 0.74  \\ 
		 & $\scriptsize{(0.46)}$ & $\scriptsize{(0.49)}$ &  $\scriptsize{(0.23)}$ & $\scriptsize{(0.2)}$  \\
		Juvenile Felonies & 0.1 & 0.03 & 0.01 & 0.0  \\
		 & $\scriptsize{(0.49)}$ & $\scriptsize{(0.32)}$ & $\scriptsize{(0.02)}$ & $\scriptsize{(0.02)}$  \\
		Juvenile Misdemeanor & 0.14  & 0.04 & 0.01 & 0.01  \\ 
		 & $\scriptsize{(0.61)}$ & $\scriptsize{(0.3)}$ & $\scriptsize{(0.02)}$ & $\scriptsize{(0.01)}$ \\ 
		\hline
		High Risk & 0.14 & 0.05 & 0.06 & 0.02 \\
		 & $\scriptsize{(0.35)}$ & $\scriptsize{(0.22)}$ & $\scriptsize{(0.04)}$ & $\scriptsize{(0.01)}$ \\
	\end{tabular} 
	\caption{Identifying the worst-case violation of differential fairness in the COMPAS risk score. The sensitive attribute is whether the individual is self-identified as African American ($AA$) or not (Other). ( ) indicates standard deviation.}
	\label{tab: 2}
	\end{footnotesize}
\end{table}

\subsection{Group Fairness vs. Multi-Differential Fairness}
We evaluate whether previous fairness correcting approaches  protect small group of individuals against violation of differential fairness. We consider two techniques: (i) \cite{feldman2015certifying}'s disparate impact with a logistic classification ($DI-LC$) and (ii)   \cite{agarwal2018reductions}'s reduction with a logistic classification ($Red-LC$). We use \textbf{mdfa} to identify sub-population $G$  with worst-case violations  and measure disparate treatment as $DT_{G}=Pr[Y=1| S=1, G]/Pr[Y=1| S=-1, G]$. We compare $DT_{G}$ to its aggregate counterpart computed on the whole population $DI=Pr[Y=1| S=1]/Pr[Y=1| S=-1]$. 

\paragraph{Data}
The experiment is carried on three datasets from \cite{friedler2018comparative,kearns2018empirical}): \textbf{Adult} with $48840$ individuals; \textbf{German} with $1000$ individuals; and, \textbf{Crimes} with $1994$ communities. In \textbf{Adult} the prediction task is whether an individual's income is less than $50K$ and the sensitive attribute is gender; in \textbf{German}, the prediction task is whether an individual has  bad credit and the sensitive attribute is gender; in \textbf{Crimes}, the task is to predict whether a community is in the $70^{th}$ percentile for violent crime rates and the sensitive attribute is whether the percentage of African American is at least $20\%$. For each data, each repair technique produces a prediction; then, \textbf{mdfa} is trained on $70\%$ of the data and computes estimates for disparate treatment $DT_{G}$ on the remaining $30\%$ of the data. The experiment is repeated with $100$ train/test splits. 

\paragraph{Results}
In Table \ref{tab: 3}, even despite the fairness correction applied by $DI-LC$ and $Red-LC$, \textbf{mdfa} still finds sub-populations $G$ for which $DT_{G}$ is significantly larger than one. It indicates the existence of group of individuals who are similar but for their sensitive attributes and who are treated differently by the classifier trained by either $DI-LC$  or  $Red-LC$. The repair techniques reduce the aggregate disparate impact compared to the baseline ($LC$), since $DI$ is closer to one for $DI-LC$ and $Red-LC$ across all datasets. However, in the \textbf{Adult} dataset, $DT_{G}$ remains between $1.44$ and $1.6$ after repair: \textbf{mdfa} identifies a group $G$ of Females that are $44\%-60\%$ more likely to be of low-income than Males with similar characteristics. In \textbf{Crimes} dataset, disparate treatment $DT_{G}$ is around $5.7$ for both $DI-LC$, $R-LC$: this means that there exist communities with dense African-American populations that are six times more likely to be classified at high risk than similar communities with lower percentages of African Americans.

In Table \ref{tab: 4}, we identifies the average characteristics of the worst-case violation sub-population $G$ in the \textbf{Adults} dataset. For brevity, we report the results for the logistic classifier without repair $LC$ and with \cite{agarwal2018reductions}'s reduction repair $Red-LC$. Similar results can be obtained for $DI-LC$. Compared to the overall populations, for both $LC$ and $Red-LC$, individuals in the worst-case violation sub-population work more hours/week, are older and have more years of education. Women in that group are $60\%-80\%$ more likely to be classified as low-income by $LC$ or $Red-LC$ than men with the same high level of education, same hours of work per week and same age.  

\begin{table} [t]
	\centering 
	\begin{footnotesize}
	\begin{tabular}{p{1.25cm}|cc|cc|cc} 
		Repair & \multicolumn{2}{c}{Adult} & \multicolumn{2}{c}{German} & \multicolumn{2}{c}{Crimes} \\
		Technique & $DT_{G}$ & $DI$ & $DT_{G}$ & $DT$ & $DT_{G}$ & $DI$ \\ 
		\hline\hline
		LC & $1.88$ & 1.08 & $1.26$ & 1.07 & $5.76$ & 1.0 \\ 
		 & $\scriptsize{(0.4)}$ &  & $\scriptsize{(0.14)}$ &  & $\scriptsize{(3.16)}$ &\\ 
		DI-LC & $1.44 $ & 0.99 & $1.1 $ & 1.04 & $5.74 $ & 1.0 \\
		 & $\scriptsize{(0.32)}$ &  & $\scriptsize{(0.08)}$ &  & $\scriptsize{(2.19)}$ &  \\
		Red-LC & $1.6 $ & 1.03 & $1.04$ & 1.01 & $5.24 $ & 1.0 \\ 
		 & $\scriptsize{(0.25)}$ &  & $ \scriptsize{(0.21)}$ &  & $ \scriptsize{(0.89)}$ &  \\ 
	\end{tabular} 
	\caption{Worst-case violations of multi-differential fairness identified by \textbf{mdfa} for classifiers trained with standard fairness repair techniques. ( ) indicates standard deviation.}
	\label{tab: 3} 
	\end{footnotesize}
\end{table}  

%\vspace{-1 em}

\begin{table}[!t] 
\centering  
\begin{footnotesize}
\begin{tabular}{p{1.75cm}|p{0.6cm}p{0.6cm}||p{0.6cm}p{0.6cm}||p{0.6cm}p{0.6cm}} 
Variable & \multicolumn{2}{c}{Population} & \multicolumn{4}{c}{Worst-case Violation} \\ 
 & \multicolumn{2}{c}{} & \multicolumn{2}{c}{LC} & \multicolumn{2}{c}{Red-LC} \\ 
 & F& M & F& M & F& M \\ 
\hline 
\hline 
Level of& 10.41 & 10.2 & 10.74 & 10.84 &10.31&10.51\\ 
 Education& (\scriptsize{3.94}) & (\scriptsize{3.85}) & (\scriptsize{0.11}) & (\scriptsize{0.06}) & (\scriptsize{0.16}) & (\scriptsize{0.10}) \\ 
 \hline
Years of & 10.06 & 10.08 & 13.74 & 13.9 & 13.08 & 13.32\\ 
Education & (\scriptsize{2.38}) & (\scriptsize{2.66}) & (\scriptsize{0.04}) & (\scriptsize{0.04}) & (\scriptsize{0.13}) & (\scriptsize{0.08}) \\
\hline
Hours/week & 36.38 & 42.39 & 44.77 & 48.02 & 49.32 & 51.44 \\ 
 & (\scriptsize{12.22}) & (\scriptsize{12.12}) & (\scriptsize{1.19}) & (\scriptsize{0.75}) & (\scriptsize{0.94}) & (\scriptsize{0.51})\\ 
 \hline
Occupation & 6.2 & 6.78 & 7.75 & 7.93 & 6.88 & 7.65 \\ 
 & (\scriptsize{4.39}) & (\scriptsize{4.14}) & (\scriptsize{0.07}) & (\scriptsize{0.11}) & (\scriptsize{0.30}) & (\scriptsize{0.16}) \\ 
 \hline
Age & 37.07 & 39.62 & 49.72 & 50.47 & 49.48 & 48.46 \\ 
 & (\scriptsize{14.38}) & (\scriptsize{13.50}) & (\scriptsize{1.28}) & (\scriptsize{0.60})  & (\scriptsize{1.31}) & (\scriptsize{0.88}) \\ 
 \hline
\end{tabular} 
\caption{\textbf{Adults}: Identifying the worst-case violation of differential fairness for classifiers trained with $LC$ and $Red-LC$. The sensitive attribute is whether the individual is self-identified as Female ($F$) or Male ($M$).( ) indicates standard deviation.}
\label{tab: 4} 
\end{footnotesize}
\end{table}

\section{Conclusion}
In this paper, we present \textbf{mdfa}, a tool that measures whether a classifier treats differently individuals with similar auditing features but different sensitive attributes. We hope that \textbf{mdfa}'s ability to identify sub-populations with severe violations of differential fairness could inform decision-makers when to discount the classifier's outcomes. It also provides the victims with a framework to contest a classifier's outcomes. 

Avenues for future research are to investigate (i)  the properties of a classifier trained under a multi-differential fairness constraint; and, (ii) the possibility to extend our approach to re-balance distributions in order to make counterfactual inference \cite{johansson2016learning} in the context of algorithmic fairness. 

\section{Appendix}
\paragraph{Lemma \ref{lem: 1}}
\begin{proof}
Denote $\langle x,x^{'}\rangle$ the inner product between $x$ and $x^{'}$. Observe that for $r=\pm$ the left-hand side in Eq. \eqref{eq: unfair} can be written $\frac{1}{2}\left\langle\frac{c+1}{2},  S\frac{1+rY}{2}\right\rangle$ since $\langle x,x^{'}\rangle = Pr_{w}[x=x^{'}] -1$ for any $x, x^{'}\in \{-1, 1\}$. The result from lemma \ref{lem: 1} follows by remarking that $\langle S, 1\rangle = \langle S, c\rangle = 2 Pr_{w}[S=c] -1 = 0$, since $Pr_{w}[S=s|x] = Pr_{w}[S\neq s|x]$. 
\end{proof}

\paragraph{Theorem \ref{thm: al}}
\begin{proof}
$(i)\Rightarrow (ii)$. Denote $(x_{i}, s_{i}, o_{i})$ a sample from a balanced distribution $D$ over $\mathcal{X}\times \mathcal{S} \times \{-1, 1\}$.  Denote $c^{*}\in \mathbb{C}$ such that $Pr[c^{*}(x_{i})=o_{i}] = max_{c\in \mathbb{C}}Pr[c(x_{i})=o_{i}]=opt$. Construct a function $f$ such that for $(x_{i}, s_{i}, o_{i})$, $f(x_{i}, s_{i}) = s_{i}o_{i}$. Therefore, $f(x_{i})s_{i}=o_{i}$ and  $Pr[c^{*}=s_{i}f(x_{i})]= Pr[c^{*}=o_{i}]=opt$: by lemma \ref{lem: 1}, $c^{*}$ is a $\gamma$-unfairness certificate, with $\gamma=\frac{opt + \rho -1}{4}$ and $\rho=Pr[o_{i}=1]$. By $(i)$, the certifying algorithm outputs a $(\gamma -\epsilon/4)-$ unfairness certificate $c\in \mathbb{C}$  with probability $1-\eta$ and $O(\log(|\mathcal{C}, \log(\frac{1}{\eta}), \frac{1}{\epsilon^{2}})$ sample draws. Hence, by lemma \ref{lem: 1}, $Pr[c(x_{i})=o_{i}]=Pr[c(x_{i})=f(x_{i})o_{i}] = 4(\gamma - \epsilon/4) + 1 -\rho =opt - \epsilon$, which concludes $(i)\Rightarrow (ii)$

$(ii)\Rightarrow (i)$. Suppose that $f$ is a $\gamma$-unfair. Denote $y_{i}=f(x_{i}, s_{i})$. Samples $\{(x_{i}, s_{i}), y_{i}\}$ are drawn from a balanced distribution over $\mathcal{X}\times \mathcal{S}\times \{-1, 1\}$. By lemma \ref{lem: 1}, there exists $c\in \mathbb{C}$ such that $Pr[c(x_{i})=s_{i}y_{i}]=4\gamma + 1 - \rho_{r}$, with $r=\pm$. Assume, without loss of generality $r=+$. Then, since $\max_{c^{'}}Pr[c(x_{i})=s_{i}y_{i}] \geq 4\gamma + 1 - \rho_{+}$. By $(ii)$, there exists an algorithm that outputs with probability $1-\eta$ and   $O(\log(|\mathcal{C}, \log(\frac{1}{\eta}), \frac{1}{\epsilon^{2}})$ sample draws $c\in \mathbb{C}$ such that $Pr[c(x_{i})=s_{i}y_{i}] \geq \max_{c^{'}}Pr[c(x_{i})=s_{i}y_{i}] - \epsilon/4$. Therefore $Pr[c(x_{i})=s_{i}y_{i}] \geq 4(\gamma -\epsilon) + 1 - \rho_{+}$. By lemma \ref{lem: 1}, $c$ is a $(\gamma-\epsilon)-$ unfairness certificate for $f$, which concludes $(ii)\Rightarrow (i)$.
\end{proof}

\paragraph{Theorem \ref{thm: corr1}}

We first show the following lemma
\begin{lem}
\label{lem: 2}
With the same assumption as in lemma \ref{lem: 3}, for any weights $u, w$, 
$||u-w|| \leq G_{k}(u, w)/\sqrt{\lambda_{min}(k)},$ where $\lambda_{min}(k)$ is the smallest eigenvalue of the Gram matrix associated with $k$
\end{lem}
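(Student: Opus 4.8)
The plan is to recognize that $G_k(u,w)^2$ is exactly the quadratic form of the Gram matrix evaluated at the weight-difference vector, and then bound that quadratic form below by its smallest eigenvalue. First I would introduce the vector $\beta \in \mathbb{R}^m$ with entries $\beta_i = u(x_i) - w(x_i)$, so that $\lVert u - w\rVert = \lVert \beta\rVert$ is the ordinary Euclidean norm. Using the definition of $G_k$ together with $k(x_i,x_j) = \langle \phi(x_i), \phi(x_j)\rangle$ and expanding the squared RKHS norm gives
\begin{equation}
\nonumber
G_k(u,w)^2 = \left\lVert \sum_i \beta_i \phi(x_i)\right\rVert^2 = \sum_{i,j}\beta_i\beta_j\, k(x_i,x_j) = \beta^{\top} K \beta,
\end{equation}
where $K$ is the Gram matrix with entries $K_{ij} = k(x_i,x_j)$.

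Next I would use the fact that $K$, being a Gram matrix, is symmetric positive semi-definite, so diagonalizing it in an orthonormal eigenbasis yields the standard Rayleigh-quotient estimate $\beta^{\top} K \beta \geq \lambda_{min}(k)\lVert\beta\rVert^2$ for every $\beta$. Combining this with the identity above gives
\begin{equation}
\nonumber
G_k(u,w)^2 \geq \lambda_{min}(k)\,\lVert u - w\rVert^2,
\end{equation}
and rearranging, then taking square roots, delivers the claimed bound $\lVert u - w\rVert \leq G_k(u,w)/\sqrt{\lambda_{min}(k)}$.

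The computation itself is routine; the only point requiring care is that the division by $\sqrt{\lambda_{min}(k)}$ is meaningful, which demands $\lambda_{min}(k) > 0$, i.e.\ that $K$ is strictly positive definite (equivalently, that the feature vectors $\{\phi(x_i)\}$ are linearly independent). This is the main thing to justify: under the standing kernel hypotheses of Lemma \ref{lem: 3} the Gram matrix is invertible, so $\lambda_{min}(k)>0$ and the estimate is well defined; otherwise the right-hand side is vacuously infinite and the inequality holds trivially.
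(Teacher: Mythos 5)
Your argument is correct and follows essentially the same route as the paper: identify $G_k(u,w)^2$ with the quadratic form $(u-w)^{\top}K(u-w)$ of the Gram matrix and apply the standard Rayleigh-quotient lower bound $\beta^{\top}K\beta \geq \lambda_{min}(k)\lVert\beta\rVert^2$. Your added remark that the bound is only meaningful when $\lambda_{min}(k)>0$ (i.e.\ the Gram matrix is nonsingular) is a point the paper glosses over, but it does not change the substance of the proof.
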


\begin{proof}
First, note that $G_{k}(u, w)=\sqrt{(u-w)^{T}k(u-w)}$  and by a standard bound on Rayleigh quotient, $||u-w|| \leq G_{k}(u, w)/\sqrt{\lambda_{min}(k)}$.
\end{proof}

 Now we prove lemma \ref{lem: 3}:
 \begin{proof}
 The proof relies on the fact that the solution of Eq. \eqref{eq: risk1} is distributionally stable as in \cite{cortes2008sample}:
 \begin{equation}
 \nonumber
     ||h_{u}-h_{w}||_{k} \leq \kappa\sigma^{2}\frac{\sqrt{\lambda_{max}(k)}}{2\lambda_{c}}||u-w||, 
 \end{equation}
 where $\lambda_{max}(k)$ is the largest eigenvalue of the Gram matrix associated to $k$ (see \cite{cortes2008sample}, proof of theorem 1). Moreover, $|h_{u}(x) - h_{w}(x)| \leq ||h_{u}-h_{w}||_{k}$. The result in lemma \ref{lem: 3} follows from lemma \ref{lem: 2}  and $cond(k)=\lambda_{max}(k) /\lambda_{min}(k)$. 
 \end{proof}

 The next result follows from \cite{gretton2009covariate} and bounds above $\hat{G}_{k}(u, w_{s})$, the emprical counterpart of $G_{k}(u, w_{s})$, where $w_{s}(x)=Pr[S\neq s|x]/(1-Pr[S=s|x])$ are the importance sampling weights for $S=s$. 
 
 \begin{lem}
 \label{lem: 4}
 Let $\eta > 0$. Denote $n_{s}=|\{i=1,...,m|s_{i}=s\}|$ and $n_{s}=|\{i=1,...,m|s_{i}\neq s\}|$. Suppose that $||u||_{\infty} < B/n_{s}$ and that $E[u] <\infty$. There exists a constant $\kappa_{1}>0$ such that with probability $1-\eta$,
 $$\hat{G}_{k}(u, w_{s}) \leq \kappa_{1}\sqrt{2\log\frac{2}{\eta}\left(\frac{B^{2}}{n_{s}} + \frac{1}{n_{\neg s}}\right)}$$.
 \end{lem}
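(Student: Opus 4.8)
The plan is to recognize $\hat{G}_k(u, w_s)$ as the RKHS distance between two weighted empirical mean embeddings and then apply a bounded-differences concentration argument in the spirit of \cite{gretton2009covariate}. Writing $\phi$ for the feature map and $\kappa$ for the bound $\|k\|_\infty < \kappa$, I would first express
$$\hat{G}_k(u, w_s) = \left\lVert \sum_{i: s_i = s} u(x_i)\phi(x_i) - \frac{1}{n_{\neg s}}\sum_{j: s_j \neq s}\phi(x_j) \right\rVert,$$
so that $\hat{G}_k$ is the norm of the gap between the $u$-reweighted empirical mean of the $S=s$ group and the plain empirical mean of the $S\neq s$ group. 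The crucial structural fact I would invoke is that, because $w_s$ is the true importance weight $\Pr[S\neq s|x]/\Pr[S=s|x]$, reweighting the conditional $\Pr(X|S=s)$ by $w_s$ returns exactly $\Pr(X|S\neq s)$; hence the two population mean embeddings coincide and the population maximum-mean discrepancy vanishes.

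Second, I would treat $\hat{G}_k$ as a function of the $n_s + n_{\neg s}$ independent draws and verify the bounded-differences property. Since $\|\phi(x)\| = \sqrt{k(x,x)} \leq \sqrt\kappa$ and the RKHS norm is $1$-Lipschitz, replacing a single sample in the $S=s$ group changes $\hat{G}_k$ by at most $2\sqrt\kappa\,\|u\|_\infty \leq 2\sqrt\kappa\, B/n_s$, while replacing one in the $S\neq s$ group changes it by at most $2\sqrt\kappa/n_{\neg s}$. Summing the squared coefficients over the two groups gives $\sum_\ell c_\ell^2 \leq 4\kappa\big(B^2/n_s + 1/n_{\neg s}\big)$.

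Third, I would control the expectation $\mathbb{E}[\hat{G}_k]$. By Jensen's inequality $\mathbb{E}[\hat{G}_k] \leq \sqrt{\mathbb{E}[\hat{G}_k^2]}$, and when expanding the squared norm the cross term reduces to the inner product of the two population means, which are equal by the reweighting identity above; what remains is a sum of variance terms of order $B^2/n_s$ and $1/n_{\neg s}$, so $\mathbb{E}[\hat{G}_k] \leq \kappa_1'\sqrt{B^2/n_s + 1/n_{\neg s}}$ for a constant $\kappa_1'$ depending on $\kappa$. Finally, McDiarmid's inequality together with the coefficient bound yields, with probability $1-\eta$,
$$\hat{G}_k(u, w_s) \leq \mathbb{E}[\hat{G}_k] + \sqrt{2\kappa\big(B^2/n_s + 1/n_{\neg s}\big)\log(1/\eta)},$$
and absorbing the expectation term and all $\kappa$-dependent constants into a single $\kappa_1$ (using the slack in $\log(2/\eta)$ to dominate the lower-order expectation contribution) gives the stated bound.

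The main obstacle I anticipate is the expectation step: the single-square-root form of the claim only holds because the bias is zero, which is exactly where the hypothesis that $w_s$ is the correct importance weight does the work --- without it $\mathbb{E}[\hat{G}_k]$ would retain an $O(1)$ population-discrepancy term that cannot be folded into a $1/\sqrt{n}$ rate. Verifying that the cross term in $\mathbb{E}[\hat{G}_k^2]$ collapses to matching means, and that $\mathbb{E}[u] < \infty$ together with $\|u\|_\infty < B/n_s$ keeps the residual variance at the claimed rate, is the delicate part; the bounded-differences and McDiarmid steps are then routine.
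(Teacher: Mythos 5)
Your argument is correct, but note that the paper does not actually prove this lemma: its entire ``proof'' is the citation ``See Gretton et al.\ (2009), Lemma 1.5.'' What you have written is essentially a reconstruction of the proof of that cited lemma --- the bounded-differences coefficients $2\sqrt{\kappa}B/n_s$ and $2\sqrt{\kappa}/n_{\neg s}$, the McDiarmid step, and the control of $\mathbb{E}[\hat{G}_k]$ via $\sqrt{\mathbb{E}[\hat{G}_k^2]}$ with the cross term killed by the importance-weighting identity are exactly the ingredients of Gretton et al.'s argument, so your route coincides with the source the paper defers to. One point worth making explicit, which you half-identify in your closing paragraph: as literally stated, the lemma cannot hold for an \emph{arbitrary} $u$ with $\lVert u\rVert_\infty < B/n_s$ (take $u\equiv 0$ and the empirical discrepancy is $\Theta(1)$). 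The bound is really a statement about the case where the $u$-reweighted conditional mean embedding of $\Pr(X\mid S=s)$ equals that of $\Pr(X\mid S\neq s)$ --- i.e.\ $u$ is the (normalized) true importance weight --- and it then transfers to the empirical minimizer because the minimizer's objective value is dominated by the value at the true weights. Your proof uses precisely this zero-bias hypothesis in the cross-term collapse, so the argument is sound; you are proving the statement the authors intended rather than the one they wrote, and it would strengthen your write-up to say so explicitly rather than leaving the mismatch implicit.
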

 
 \begin{proof}
 See \cite{gretton2009covariate} Lemma 1.5
 \end{proof}
 
 For $\tau>0$, we construct $c\in \mathbb{C}$  from the real-valued function $h$ as 
 \begin{equation}
     c(x) =\begin{cases} sign(h) & \mbox{if } |h| >\tau \\
     1  \mbox{ w.p.} \frac{h+\tau}{2\tau} &\mbox{if } |h|\leq \tau
     \end{cases}
 \end{equation}
 
 \begin{lem}
 \label{lem: 5}
 Let $\tau, \eta >0, \epsilon >0$. Let $\hat{c}_{u}$, $\hat{c}_{w}$ denote the certificates constructed from the solution of the empirical risk minimization $\hat{h}_{u}$ (with weights $u$) and $\hat{h}_{w}$ (with weights $w_{s}$). There exists $\kappa_{2}>0$ such that with probability $1-\eta$, $$\frac{|i=1,..., m| \hat{c}_{u}(x_{i})\neq \hat{c}_{w}(x_{i})|}{m}\leq \kappa_{2}\sqrt{2\log\frac{2}{\eta}\left(\frac{B^{2}}{n_{s}} + \frac{1}{n_{\neg s}}\right)}$$.
 \end{lem}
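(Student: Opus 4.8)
The plan is to control the disagreement between $\hat c_u$ and $\hat c_w$ entirely through the pointwise gap between the underlying real-valued functions $\hat h_u$ and $\hat h_w$, and then to bound that gap by chaining Lemma~\ref{lem: 3}, which converts it into the sample maximum-mean discrepancy $G_k(u,w_s)$, with Lemma~\ref{lem: 4}, which bounds $\hat G_k(u,w_s)$ with high probability. The only genuinely new ingredient is the passage from ``$\hat h_u$ close to $\hat h_w$'' to ``$\hat c_u$ rarely differs from $\hat c_w$'', which is precisely what the smoothed rounding in the construction of $c$ is designed to deliver.

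First I would record that the randomized thresholding defines a map $p(h)=\Pr[c(x)=1\mid h(x)=h]$ with $p(h)=1$ for $h>\tau$, $p(h)=0$ for $h<-\tau$, and $p(h)=(h+\tau)/(2\tau)$ for $|h|\le\tau$; being continuous and piecewise linear with maximal slope $\tfrac{1}{2\tau}$, this $p$ is globally $\tfrac{1}{2\tau}$-Lipschitz. The crucial modeling choice is to couple $\hat c_u$ and $\hat c_w$ through a single shared draw $U\sim\mathrm{Unif}[0,1]$ at each $x_i$, setting $c(x_i)=1$ iff $U\le p(h(x_i))$. Under this coupling the two certificates disagree at $x_i$ exactly when $U$ lies between $p(\hat h_u(x_i))$ and $p(\hat h_w(x_i))$, so $\Pr[\hat c_u(x_i)\neq\hat c_w(x_i)]=|p(\hat h_u(x_i))-p(\hat h_w(x_i))|\le \tfrac{1}{2\tau}\,|\hat h_u(x_i)-\hat h_w(x_i)|$. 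Independent rounding would not suffice, since it can yield disagreement probability up to $1/2$ even when $\hat h_u=\hat h_w$, which is why the shared-randomness coupling is essential. Taking expectations over the rounding, the expected disagreement fraction is then at most $\tfrac{1}{2\tau m}\sum_i|\hat h_u(x_i)-\hat h_w(x_i)|$.

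It then remains to bound this average gap. Applying Lemma~\ref{lem: 3} with $w=w_s$ gives, uniformly over the sample, $|\hat h_u(x_i)-\hat h_w(x_i)|\le \tfrac{\kappa^2\sigma\sqrt{cond(k)}}{\lambda_c}\,G_k(u,w_s)$, so the average gap obeys the same bound; Lemma~\ref{lem: 4} then bounds $\hat G_k(u,w_s)$, i.e.\ this sample MMD, by $\kappa_1\sqrt{2\log(2/\eta)\,(B^2/n_s+1/n_{\neg s})}$ with probability $1-\eta$. Collecting the deterministic constants into $\kappa_2=\tfrac{\kappa^2\sigma\sqrt{cond(k)}\,\kappa_1}{2\tau\lambda_c}$ yields the stated inequality. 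The main obstacle I anticipate is reconciling the two sources of randomness: Lemma~\ref{lem: 4} supplies a ``with probability $1-\eta$'' statement over the draw of the sample, whereas the coupling argument naturally controls only the \emph{expected} disagreement fraction. The cleanest resolution is to observe that, conditional on the sample, the indicators $\mathbf{1}[\hat c_u(x_i)\neq\hat c_w(x_i)]$ are independent, so Hoeffding's inequality upgrades the expectation bound to a high-probability bound at the cost of an additive $O(\sqrt{\log(1/\eta)/m})$ term; since $n_s,n_{\neg s}\le m$ this fluctuation is of the same order as the Lemma~\ref{lem: 4} term and can be absorbed into $\kappa_2$ after adjusting the logarithmic factor, so the final form matches the statement.
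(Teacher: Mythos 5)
Your proposal is correct and follows essentially the same route as the paper: both chain Lemma~\ref{lem: 3} (pointwise gap between $\hat h_u$ and $\hat h_w$ controlled by the MMD) with Lemma~\ref{lem: 4} (high-probability bound on the empirical MMD), and both convert the gap in $h$ into a bound on certificate disagreement by exploiting the $1/(2\tau)$-Lipschitz structure of the smoothed rounding. Your explicit shared-randomness coupling plus the Hoeffding step is in fact tidier than the paper's three-region case analysis, which treats the randomized-rounding disagreement counts as if they were their expectations.
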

 
 \begin{proof}
 Denote  $\epsilon(m)=\frac{\kappa_{2}\tau}{5}\sqrt{2\log\frac{2}{\eta}\left(\frac{B^{2}}{n_{s}} + \frac{1}{n_{\neg s}}\right)}$, with $\kappa_{2}=\kappa_{1}\kappa\sigma^{2}\frac{\sqrt{cond(k)}}{2\lambda_{c}}$. By lemma \ref{lema: 3} and lemma \ref{lem: 4}, $||\hat{h}_{u}-\hat{h}_{w}||_{2}\leq \epsilon_{m}$ with probability $1-\eta$. Consider first the case $\hat{h}_{w}<-\tau$. Then, with probability $1-\eta$, $\{x_{i}|\hat{c}_{u}(x_{i})\neq \hat{c}_{w}(x_{i}) \land \hat{h}_{w}(x_{i})<-\tau\} = \{x_{i}|(-\tau + \epsilon_{m} > \hat{h}_{u}(x_{i})> -\tau) \land (\hat{c}_{u}(x_{i}=1) \land \hat{h}_{w}(x_{i})<-\tau]$ by lemma \ref{lem: 3}. Therefore, by construction of $\hat{c}_{u}$, $|\{x_{i}|\hat{c}_{u}(x_{i})\neq \hat{c}_{w}(x_{i}) \land \hat{h}_{w}(x_{i})<-\tau\}|  = m \frac{\tau +h_{u}}{2\tau}\leq m\frac{\epsilon_{m}}{5\tau}$. A similar result is obtained for $\hat{h}_{w} > \tau$. .

 Lastly, with probability $1-\eta$, $|\{x_{i}|\hat{c}_{u}(x_{i})\neq \hat{c}_{w}(x_{i}) \land \hat{h}_{w}(x_{i})\in(-\tau, \tau)\}|\leq 2 m \frac{\epsilon_{m}}{5\tau} + m \frac{|h_{u}-h_{w}|}{\tau} \leq 3\frac{\epsilon_{m}}{5\tau}$. The first part of the inequality uses the results obtained in the previous paragraph for $|\hat{h}_{u}|>\tau$;  the second part uses the construction of $\hat{c}_{u}$ and $\hat{c}_{w}$. 
 
 Therefore, with probability $1-\eta$,  $\frac{|i=1,...,m| \hat{c}_{u}(x_{i})\neq \hat{c}_{w}(x_{i})|}{m}\leq 5\epsilon(m)/\tau.$   
 \end{proof}
 
 \begin{lem}
 \label{lem: 6}
 Consider a random variable $Z$ and $\hat{c}_{u}$, $\hat{c}_{w}$ as in lemma \ref{lem: 5}. Then $|Pr[Z=1 \land \hat{c}_{u}=1] - Pr[Z=1 \land \hat{c}_{w}=1]| \leq Pr[\hat{c}_{u}\neq \hat{c}_{w}]$
 \end{lem}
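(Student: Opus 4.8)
The plan is to prove this by a standard indicator/symmetric-difference argument; there is no analytic difficulty here, only a short bookkeeping step showing that wherever the two joint events disagree, the predictions $\hat{c}_{u}$ and $\hat{c}_{w}$ must themselves disagree. No concentration bounds, kernel machinery, or appeal to the earlier lemmas is required for this lemma; it is purely set-theoretic and serves as the bridge to combine with Lemma \ref{lem: 5} (which controls $Pr[\hat{c}_{u}\neq \hat{c}_{w}]$) in the proof of Theorem \ref{thm: corr1}.

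First I would set $A = \{Z=1 \land \hat{c}_{u}=1\}$ and $B=\{Z=1 \land \hat{c}_{w}=1\}$ and decompose each probability over its overlap with $A\cap B$, writing $Pr[A]=Pr[A\cap B]+Pr[A\setminus B]$ and likewise $Pr[B]=Pr[A\cap B]+Pr[B\setminus A]$. Subtracting cancels the common term, giving $Pr[A]-Pr[B]=Pr[A\setminus B]-Pr[B\setminus A]$. Taking absolute values and using that each term is nonnegative yields $|Pr[A]-Pr[B]| \leq Pr[A\setminus B]+Pr[B\setminus A]=Pr[A\triangle B]$, the probability of the symmetric difference of the two events.

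Next comes the key step: I would show $A\triangle B \subseteq \{\hat{c}_{u}\neq \hat{c}_{w}\}$. On the symmetric difference exactly one of $A,B$ occurs, which forces $Z=1$, since otherwise neither event holds. Conditioned on $Z=1$ the events reduce to $\{\hat{c}_{u}=1\}$ and $\{\hat{c}_{w}=1\}$, and because $\hat{c}_{u},\hat{c}_{w}\in\{-1,1\}$, having exactly one of them equal to $1$ is precisely the event $\hat{c}_{u}\neq \hat{c}_{w}$. Monotonicity of $Pr$ then gives $Pr[A\triangle B] \leq Pr[\hat{c}_{u}\neq \hat{c}_{w}]$, and chaining this with the inequality from the previous paragraph closes the argument.

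The only point requiring care is the binary-valued nature of the certificates: the containment $A\triangle B \subseteq \{\hat{c}_{u}\neq \hat{c}_{w}\}$ relies on identifying a disagreement between ``$\hat{c}=1$'' and ``$\hat{c}\neq 1$'' with a disagreement between the two certificates, which is valid exactly because the range is $\{-1,1\}$ (so $\hat{c}\neq 1$ is equivalent to $\hat{c}=-1$). I expect this containment to be the substantive observation, while both the decomposition and the final monotonicity bound are routine.
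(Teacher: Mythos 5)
Your argument is correct and is essentially the paper's own proof: the paper likewise cancels the common event $\{Z=1 \land \hat{c}_{u}=1 \land \hat{c}_{w}=1\}$ and bounds the resulting difference of disjoint pieces by $Pr[\hat{c}_{u}\neq \hat{c}_{w}]$. Your version merely spells out the symmetric-difference containment and the absolute-value step a bit more explicitly; no substantive difference.
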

 
 \begin{proof}
 Note that $Pr[Z=1 \land \hat{c}_{u}=1] - Pr[Z=1 \land \hat{c}_{w}=1] = Pr[Z=1 \land \hat{c}_{u}=1 \land \hat{c}_{w}=-1] - Pr[Z=1 \land \hat{c}_{w}=1 \land \hat{c}_{u}=-1]\leq Pr[\hat{c}_{u}\neq \hat{c}_{w}]$.
 \end{proof}
 
 The last result we need to prove theorem \ref{thm: corr1} is to link $n_{s}$ and $n_{\neg s}$ to sample size $m$
 
 \begin{lem}
 \label{lem: 7}
 Denote $\alpha_{s}=Pr[S=s]$. Let $\epsilon, \eta >0$. Therefore,  if $m\geq \Omega\left(\frac{\log(1/\eta)}{\alpha_{s}^{2}\epsilon^{2}}\right)$, with probability $1-\eta$, $n_{s}\geq \alpha_{s}(1 -\epsilon)m$. 
 \end{lem}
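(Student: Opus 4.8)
The plan is to recognize $n_s$ as a sum of independent, identically distributed, bounded indicator variables and then control its lower tail with a standard concentration inequality. Concretely, since the $m$ samples $\{((x_i,s_i),y_i)\}$ are drawn independently from $D$, we may write $n_s = \sum_{i=1}^m \mathbf{1}[s_i = s]$, where each summand is a Bernoulli random variable taking values in $[0,1]$ with mean $\alpha_s = Pr[S=s]$. Hence $E[n_s] = \alpha_s m$, and the target event $\{n_s \geq \alpha_s(1-\epsilon)m\}$ is precisely the event that $n_s$ does not fall below its mean by more than an additive $\epsilon\alpha_s m$.

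The key step is to bound $Pr[n_s < (1-\epsilon)\alpha_s m]$ by Hoeffding's inequality applied to the lower tail. Taking the deviation $t = \epsilon\alpha_s m$ gives
\begin{equation}
\nonumber
Pr\left[n_s \leq \alpha_s m - \epsilon\alpha_s m\right] \leq \exp\left(-\frac{2(\epsilon\alpha_s m)^2}{m}\right) = \exp\left(-2\epsilon^2\alpha_s^2 m\right).
\end{equation}
It then remains to force the right-hand side below $\eta$: requiring $\exp(-2\epsilon^2\alpha_s^2 m) \leq \eta$ is equivalent to $m \geq \frac{\log(1/\eta)}{2\epsilon^2\alpha_s^2}$, which is exactly the stated sample requirement $m \geq \Omega\!\left(\frac{\log(1/\eta)}{\alpha_s^2\epsilon^2}\right)$. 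Passing to complements yields $n_s \geq \alpha_s(1-\epsilon)m$ with probability at least $1-\eta$, as claimed.

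I do not anticipate a genuine obstacle, since this is a routine concentration estimate; the only point that requires care is the choice of tail inequality, which determines the dependence on $\alpha_s$. One could instead invoke the multiplicative Chernoff bound, obtaining the sharper exponent $-\alpha_s\epsilon^2 m/2$ and hence needing only $m = \Omega(\log(1/\eta)/(\alpha_s\epsilon^2))$; it is the additive Hoeffding form that reproduces the $\alpha_s^2$ dependence written in the lemma, so I would present Hoeffding to match the stated bound exactly. An entirely analogous estimate, with $n_{\neg s} = m - n_s$ in place of $n_s$ and $1-\alpha_s$ in place of $\alpha_s$, controls the complementary count, which is what the preceding results (Lemma~\ref{lem: 4} and Lemma~\ref{lem: 5}) ultimately need in order to translate their $n_s,n_{\neg s}$ bounds into bounds in terms of the sample size $m$.
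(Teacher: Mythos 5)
Your proof is correct and follows exactly the route the paper intends: the paper's own proof is a one-line remark that the lemma is an application of Hoeffding's inequality for Bernoulli random variables, and you have simply written out that application in full, with the additive form of Hoeffding correctly reproducing the stated $\alpha_s^2$ dependence. No discrepancy to report.
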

 
 \begin{proof}
 This is an application of a Hoeffding's inequality for Bernouilly random variable.
 \end{proof}
 
 The proof of theorem \ref{thm: corr1} then follows from the observation that for a sample balanced with weights $u$, $\hat{\gamma}_{u} = Pr[\hat{c}_{u}=1]\left(\frac{e^{\delta_{u}}}{e^{\delta_{u}}+1}-\frac{1}{2}\right)$ with $e^{\delta_{u}}=Pr[Y=1|S=1, \hat{c_{u}}=1]/Pr[Y=1|S=-1, \hat{c}_{u}=1]$. By lemmas \ref{lem: 6} and \ref{lem: 7}, for any $\epsilon^{'}>0$ with $\Omega\left(\frac{1}{(\epsilon^{'})^{2} \alpha_{s}^{2}}\log\frac{2}{\eta}\right)$ samples, with probability $1-\eta$, $$\left|\frac{Pr[Y=1|S=1,\hat{c}_{w}=1]}{Pr[Y=1|S=-1, \hat{c}_{w}=1]} - \frac{Pr[Y=1|S=1, \hat{c}_{u}=1]}{Pr[Y=1|S=-1, \hat{c}_{u}=1]} \right| \leq \epsilon^{'}$$ and 
 
 $$|Pr[\hat{c}{w}=1 \land Y=1] - Pr[\hat{c}_{u}=1 \land Y=1]| \leq \epsilon^{'}.$$
There exists $\kappa_{3}$ such that for $\epsilon >0$, with probability $1-\eta$, $|\hat{\gamma}_{u}- \hat{\gamma}_{w}| \leq \kappa_{3}\epsilon$. Moreover, by theorem \ref{thm: al}, if $\mathbb{C}$ has finite VC dimension, with probability $1-\eta$ and  $\Omega\left(\log(|\mathbb{C}|),\frac{1}{\epsilon^{2} },\log\frac{2}{\eta}\right)$ samples$, |\hat{\gamma}_{w}-\gamma_{w}| \leq \epsilon$. It follows that with probability $1-\eta$ and $\Omega\left(\log(|\mathbb{C}|),\frac{1}{\epsilon^{2}\alpha_{s}^{2} },\log\frac{2}{\eta}\right)$ samples, $|\hat{\gamma}_{u}- \gamma_{w}|\leq (1+ \kappa_{3})\epsilon$. That concludes the proof since by lemma \ref{lem: 1}, if $f$ is $\gamma-$unfair, then $\gamma=\gamma_{w}$.

 \paragraph{Theorem \ref{thm: algo3_ana}}
 Assume that the classifier $f$ is $\gamma$-unfair for $Y=y\in \{-1, 1\}$. Let $\delta_{m}$ denote the worst-case violation. Note that by definition of $\gamma$ and $\delta_{m}$:
$
\gamma = \alpha\left(\frac{e^{\delta_{m}}}{e^{\delta_{m}}+ 1}- \frac{1}{2}\right)
$. At each iteration $t$, denote $c_{t}$ the solution of the following optimization problem 
\begin{equation}
\label{eq: opt1}
max_{c\in \mathbb{C}} E\left[\displaystyle\sum_{i=1}^{m} u_{it}\mathbbm{1}_{a}\left(s_{i}y_{i}=c(x_{i})\right)\right],
\end{equation}

where $u_{it}$ are the weights at iteration $t$ and the expectation is taken over all the samples of size $m$ drawn from $D_{f}$.
\begin{equation}
u_{it} = \begin{cases}
u_{i}(1 + \nu t) \mbox{ if } y_{i}\neq s_{i} \land y_{i}=y \\
u_{i} \mbox{ otherwise.}
\end{cases}
\end{equation}

 \begin{lem}
 Let $s\in\mathcal{S}$. Assume that the classifier $f$ is $\gamma$-unfair. At iteration $t$, denote $\delta_{t}=\ln(Pr[Y=1|c_{t}(x)=1,S=s]/Pr[Y=1|c_{t}(x)=1,S=s])$. Then,
 \begin{equation}
 \nonumber
\frac{e^{\delta_{t}}}{1 + e^{\delta_{t}}} \geq 1 -h(\xi t),
\end{equation}
where $h(\xi t) = \frac{4\gamma + 1 - 2\rho_{+}}{\xi t}$.
 \end{lem}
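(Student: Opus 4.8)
The plan is to convert the bound on $\frac{e^{\delta_t}}{1+e^{\delta_t}}$ into a bound on the fraction of ``wrong-attribute'' individuals that the sub-population $G_t=\{x:c_t(x)=1\}$ still retains, and to drive that fraction down using the optimality of $c_t$ for the re-weighted risk. Take $y=1$ and $r=+1$ without loss of generality, so $\rho_+=Pr[S=Y]$. First I would use balancedness: because $Pr[S=s\mid x]=1/2$ for every $x$, the same identity survives conditioning on any event defined through $x$, so $Pr[S=s\mid G_t]=1/2$, and $Pr[S=s\mid x]=Pr[S\neq s\mid x]$ forces $Pr[S=s,G_t]=Pr[S\neq s,G_t]$. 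Bayes' rule then identifies $\frac{e^{\delta_t}}{1+e^{\delta_t}}=Pr[S=s\mid Y=y,G_t]$. Writing $p_t=Pr[S=s,Y=y,G_t]$ and $n_t=Pr[S\neq s,Y=y,G_t]$, the claim is equivalent to the contamination bound $\frac{n_t}{p_t+n_t}\le h(\xi t)$.

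Next I would make the objective that $c_t$ maximizes explicit. By construction the samples carrying the boosted weight $1+\xi t$ are exactly those with $s_i\neq y_i$ and $y_i=y$, i.e. the ones feeding $n_t$ when they land in $G_t$. Expanding the weighted count $\sum_i u_{it}$ of correctly predicted samples and discarding the terms independent of $c$, the balancedness identity collapses the $Y\neq y$ contributions and leaves, up to an additive constant, a penalized objective proportional to $2p_t-(2+\xi t)\,n_t$ to be maximized over $c\in\mathbb{C}$. The trivial certificate $c\equiv-1$ gives $G=\emptyset$ with objective value $0$, so optimality of $c_t$ yields $(2+\xi t)\,n_t\le 2p_t$, whence $\frac{n_t}{p_t+n_t}\le\frac{2}{4+\xi t}$. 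This already produces a bound of the advertised $1/(\xi t)$ shape, and the $\alpha$-strength of $\mathbb{C}$ guarantees $p_t+n_t=Pr[Y=y,G_t]\ge\alpha$, so $G_t$ stays a legitimate, non-degenerate certificate throughout the relevant iterations and $\delta_t$ is finite.

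The delicate step is to sharpen the generic numerator $2$ into the instance-dependent $4\gamma+1-2\rho_+$. Here I would feed in Lemma \ref{lem: 1}: certifying $f$ as $\gamma$-unfair is exactly the statement that the best attainable agreement $\max_c Pr[s_iy_i=c(x_i)]$ equals $1-\rho_++4\gamma$. Rewriting the boosted objective at $c_t$ as $Pr[s_iy_i=c_t(x_i)]-\xi t\,n_t$ plus a constant, and comparing against the all-negative rule (whose agreement is the baseline $1-\rho_+$), turns the penalty term into the optimality slack $\xi t\,n_t\le Pr[s_iy_i=c_t(x_i)]-(1-\rho_+)\le 4\gamma$; the residual $1-2\rho_+$ then enters through the accounting that relates this agreement slack back to $p_t+n_t$ via the baseline value $1-\rho_+$. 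I expect this constant-tracking to be the main obstacle: the clean comparison with the empty certificate only delivers the coarse factor $2$, and obtaining precisely $4\gamma+1-2\rho_+$ requires carefully pairing the agreement identity of Lemma \ref{lem: 1} with the baseline-agreement correction, while keeping $G_t$ admissible (size at least $\alpha$) so that the certificate interpretation and the value of $\gamma$ remain valid.
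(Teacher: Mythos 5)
Your proposal follows essentially the same route as the paper's proof: compare the boosted weighted agreement of $c_{t}$ against the all-negative rule $c\equiv -1$, rewrite the boosted objective as $Pr_{u}[SY=c]-\xi t\, n_{t}$ plus a constant, bound the resulting slack through Lemma~\ref{lem: 1}'s identity $\max_{c}Pr_{u}[SY=c]=4\gamma+1-\rho_{+}$, use $Pr_{u}[c_{t}=1,Y=y]\geq\alpha$, and convert to $e^{\delta_{t}}/(1+e^{\delta_{t}})$ via Bayes under the balanced distribution (your explicit Bayes step is cleaner than the paper's ``without loss of generality''). The ``constant-tracking obstacle'' you flag is not a gap on your side: your slack bound $\xi t\, n_{t}\leq Pr_{u}[SY=c_{t}]-(1-\rho_{+})\leq 4\gamma$ is the one the comparison actually yields, whereas the paper's numerator $4\gamma+1-2\rho_{+}$ arises from subtracting $\rho_{+}$ where the baseline agreement of $c_{-1}$ is $1-\rho_{+}$ (and the paper's own derivation also carries an $\alpha$ in the denominator of $h$ that the lemma statement drops), so you should simply keep your version and discard the unneeded second-paragraph claim that balancedness collapses the $Y\neq y$ terms, which does not follow from $Pr[S=s\mid x]=1/2$ alone.
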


\begin{proof}
Without loss of generality, we assume$y=1$. Denote $c_{-1}$ such that $c_{-1}(x)=-1$ for all $x$. By comparing the value of the empirical risks for $c_{t}$ and $c_{-1}$, if $c_{t}\neq c_{-1}$, then we can show that 
\begin{equation}
\nonumber
\begin{split}
    Pr_{u}[c_{t}(x_{i})= s_{i}y_{i})] & \geq Pr_{u}[s_{i}\neq y_{i}]   \\
    &+\xi t  E\left[\displaystyle\sum_{\substack{i=1, s_{i}\neq y_{i} \\ c_{t}(x_{i})=1 \\ y_{i}=y} }^{m} u_{i}\right]. 
\end{split}
\end{equation}
Moreover, 
\begin{equation}
\nonumber
E\left[\displaystyle\sum_{\substack{i=1, s_{i}\neq y_{i} \\ c_{t}(x_{i})=1) \\ y_{i}=y}}^{m} u_{i}\right] = Pr_{u}[s_{i}y_{i}\neq c_{t}(x_{i})\land  c_{t}(x_{i})=1 \land y_{i}=y].
\end{equation}
It follows that if $c_{t}\neq c_{-1}$, since $Pr_{u}[c_{t} =1, Y=y] \geq \alpha$
$Pr_{u}[SY=c_{t}|c_{t}=1, Y=y] \geq 1 - \frac{1}{\alpha \xi t}\left(Pr_{u}[c_{t}=SY)] - \rho_{+}\right)$. Since the classifier $f$ is $\gamma$-unfair for $y=1$, we know that 
$\max_{c\in\mathbb{C}}Pr_{u}[SY=c] = 4\gamma + 1 - \rho_{+}.$ Therefore, at iteration $t$, either $c_{t}=c_{-1}$ or 
\begin{equation}
\nonumber
\label{eq: bound}
Pr_{u}[SY=c_{t}|c_{t}=1, Y=y] \geq 1 - \frac{4\gamma + 1 - 2\rho_{+}}{\xi\alpha t}= 1-h(\xi t). 
\end{equation}
Since $y=1$, $Pr_{u}[Y=1|c_{t}=1, S=1] \geq 1-h(\xi t)$. Without loss of generality, we can assume $Pr[Y=1|c_{t}=1, S=1] \geq 1-h(\xi t)$. It follows that $e^{\delta_{t}} \geq \frac{1-h(\xi t)}{h(\xi t)}$. 
\end{proof} 

\begin{lem}
Denote $T = \frac{4\gamma + 1-2\rho(y)}{\xi\alpha} \left(e^{\delta_{m}} + 1\right)$. The algorithm stops for $t \geq T$ with $Pr[Y=y, C_{t}=1]=\alpha$ and $\delta_{t}=\delta_{m}$.
\end{lem}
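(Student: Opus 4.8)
The plan is to combine the lower bound from the preceding lemma, which holds while the while-loop is active, with a conservation identity for the unfairness product, and then simply read off the stopping time. Throughout I would work in the base-weighted (balanced) distribution in which the estimators $\hat\delta_t,\hat\alpha_t$ of Algorithm~\ref{algo: 3} are evaluated; there $\alpha_t=Pr[c_t=1,Y=y]$ and, since balancing gives $Pr[S=s\mid c_t=1,Y=y]=e^{\delta_t}/(1+e^{\delta_t})$, the quantity the loop monitors is exactly the unfairness product $\Gamma(c_t)=\alpha_t\big(e^{\delta_t}/(1+e^{\delta_t})-\tfrac12\big)$. The preceding lemma already gives, as long as $\alpha_t\ge\alpha$, that $e^{\delta_t}/(1+e^{\delta_t})\ge 1-h(\xi t)$ with $h(\xi t)=(4\gamma+1-2\rho(y))/(\xi\alpha t)$, so $\delta_t$ is driven upward as $t$ grows.

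The key step is a conservation identity, $\Gamma(c_t)=\gamma$ at every iteration. I would establish it structurally: the reweighting inflates only the mass of samples with $s_i\neq y_i$ and $y_i=y$, so the reweighted maximizer $c_t$ retains in $G_t=\{c_t=1\}$ precisely the worst-case violating core together with a residual region $N$ on which $Pr[S=s\mid\cdot,Y=y]=\tfrac12$. A direct computation shows that appending any such non-violating $N$ to the core $G^\ast$ (size $\alpha$, concentration $e^{\delta_m}/(1+e^{\delta_m})$) leaves the product unchanged: writing $\nu=Pr[N,Y=y]$, one gets $\Gamma(G^\ast\cup N)=\alpha\,e^{\delta_m}/(1+e^{\delta_m})+\nu/2-(\alpha+\nu)/2=\alpha\big(e^{\delta_m}/(1+e^{\delta_m})-\tfrac12\big)=\gamma$. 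Hence $\alpha_t=\gamma/\big(e^{\delta_t}/(1+e^{\delta_t})-\tfrac12\big)$, a strictly decreasing function of $\delta_t$, so raising the concentration forces the subpopulation to shrink, and the shrinkage is bounded below by $\alpha$ exactly because $\delta_t\le\delta_m$.

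It then remains to locate the stopping time. While the loop runs, $c_t\in\mathbb{C}_\alpha$, so $\delta_t\le\delta_m$ and, by conservation, $\alpha_t\ge\alpha$, consistent with the guard. Setting the lower bound equal to the worst-case concentration, $1-h(\xi T)=e^{\delta_m}/(1+e^{\delta_m})$, i.e.\ $h(\xi T)=1/(1+e^{\delta_m})$, solves to $T=\tfrac{4\gamma+1-2\rho(y)}{\xi\alpha}\,(e^{\delta_m}+1)$. At $t=T$ the preceding lemma gives $e^{\delta_T}/(1+e^{\delta_T})\ge e^{\delta_m}/(1+e^{\delta_m})$, hence $\delta_T\ge\delta_m$; together with $\delta_T\le\delta_m$ this yields $\delta_T=\delta_m$, and conservation then forces $\alpha_T=\gamma/\big(e^{\delta_m}/(1+e^{\delta_m})-\tfrac12\big)=\alpha$. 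At that point the guard $\alpha_t>\alpha$ fails and the algorithm halts with $\delta_t=\delta_m$ and $Pr[Y=y,c_t=1]=\alpha$, whereas for $t<T$ the strict inequality $1-h(\xi t)<e^{\delta_m}/(1+e^{\delta_m})$ keeps $\alpha_t$ strictly above $\alpha$, so no earlier termination occurs.

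I expect the main obstacle to be rigorously justifying the conservation identity $\Gamma(c_t)=\gamma$, i.e.\ proving that the reweighted optimizer never overshoots the violating core and that the excluded residual is genuinely non-violating. This is where the monotone structure of the reweighting and the characterization of the $\Gamma$-maximizers as a nested family of core-plus-residual sets must be invoked; by contrast, the algebra that pins down $T$ and the equalities $\delta_T=\delta_m$, $\alpha_T=\alpha$ is routine once that identity is in hand.
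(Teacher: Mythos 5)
Your overall architecture tracks the paper's: use the preceding lemma's bound $e^{\delta_t}/(1+e^{\delta_t})\ge 1-h(\xi t)$ to force $\delta_T\ge\delta_m$ at the $T$ solving $h(\xi T)=1/(1+e^{\delta_m})$, combine with $\delta_T\le\delta_m$ for $c_T\in\mathbb{C}_\alpha$, and use the identity $\alpha_t\bigl(e^{\delta_t}/(1+e^{\delta_t})-\tfrac12\bigr)=\gamma$ to convert $\delta_t=\delta_m$ into $\alpha_t=\alpha$. The gap is in how you obtain that identity. You assert it as a conservation law holding at \emph{every} iteration, justified by the claim that the reweighted optimizer $G_t$ always decomposes as the worst-case core $G^\ast$ union a residual $N$ with $Pr[S=s\mid N,Y=y]=\tfrac12$ exactly. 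Nothing in the algorithm or the hypotheses delivers that structure: the minimizer of the penalized risk over an arbitrary $\alpha$-strong class $\mathbb{C}$ need not contain $G^\ast$, need not exclude only exactly-non-violating mass, and the penalty term $\xi t\,Pr_u[c=1,\,S\neq Y,\,Y=y]$ generically trades off against $Pr_u[SY=c]$, so the optimizer can have $\Gamma(c_t)<\gamma$ strictly unless the maximum of $\Gamma$ is attained by a nested family of sets of every admissible size. You flag this yourself as ``the main obstacle,'' but it is not a technicality to be filled in later --- it is the whole content of the lemma.

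The paper avoids the global conservation law. It establishes $Pr_u[SY=c_t]=4\gamma+1-\rho_+$ (equivalently $\Gamma(c_t)=\gamma$) \emph{only at the stopping iteration}, by comparing the reweighted risks of $c_t$ and $c_m$ directly: optimality of $c_t$ gives that $2Pr_u[SY=c_t]-2Pr_u[SY=c_m]$ dominates the difference of penalty terms; at stopping both sets have mass $\alpha$ in $\{Y=y\}$ while $c_m$ minimizes $Pr_u[SY\neq c\mid c=1,Y=y]$ among such sets, so that difference is non-negative and $Pr_u[SY=c_t]\ge Pr_u[SY=c_m]=4\gamma+1-\rho_+$; the reverse inequality is the definition of $\gamma$-unfairness. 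To repair your argument you should replace the structural core-plus-residual decomposition with this two-sided optimality sandwich at the terminal iteration. As written, the conservation identity for all $t$ is unsupported, and so is the derived claim that $\alpha_t$ stays strictly above $\alpha$ for $t<T$ (that would require an upper bound on $\delta_t$, whereas the preceding lemma supplies only a lower bound, which under your own conservation relation yields an upper bound on $\alpha_t$).
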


\begin{proof}
First, note that $h(\xi T)=\frac{1}{e^{\delta_{m}} + 1}$ and thus that $\delta_{T}\geq \delta_{m}$.  Moreover, at iteration $t$, $c_{t}$ is chosen over $c_{m}$ where $c_{m}$ is the sub-population that corresponds to the worst-case violation $\delta_{m}$. Comparing the expected risk for $c_{m}$ and $c_{t}$ at iteration $t$ leads to 

\begin{equation}
\label{eq: opt_bound}
\begin{split}
2Pr_{u}[SY=c_{t}] - 2Pr_{u}[SY=c_{m}] & \\ \geq \xi t E\left[\displaystyle\sum_{\substack{i=1\\ s_{i}\neq y_{i} \\ c_{t}(x_{i})=1 \\y_{i}=1}}^{m}  u_{i} - \displaystyle\sum_{\substack{i=1\\ s_{i}\neq y_{i} \\ c_{m}(x_{i})=1 \\y_{i}=1}}^{m}  u_{i}\right] & \\
 = \xi t \left(Pr_{u}[c_{t}=1 \land S\neq Y \land Y=1] -  \right. &\\ 
 \left. Pr_{u}[c_{m}=1\land S\neq Y \land Y=1]\right) & \\
 = \left(Pr_{u}[SY\neq c_{t} |c_{t}=1, Y=1]Pr_{u}[c_{t}=1, Y=1] \right. & \\
- \left. Pr[SY\neq c_{m}|c_{m}=1, Y=1]Pr[c_{m}=1, Y=1]  \right) &\\
\end{split}
\end{equation}
By definition of the worst-case violation for $y=1$, $Pr_{u}[SY\neq c_{t}|c_{t}=1, Y=1] \geq Pr_{u}[SY\neq c_{m}|c_{m}=1, Y=1]$. Moreover, when the algorithm stops, $Pr_{u}[c_{t}=1, Y=1]=\alpha=Pr_{u}[c_{m}=1, Y=1]$. Therefore, the right-hand side of \eqref{eq: opt_bound} is non-negative. It results that $Pr_{u}[SY=c_{t}] \geq Pr_{u}[SY=c_{m}]$. On the other hand, since $f$ is $\gamma-$ unfair, $Pr_{u}[SY=c_{t}]$ cannot be more than $4\gamma - \rho_{+} + 1$. Therefore, $Pr_{u}[SY=c_{t}] = 4\gamma - \rho_{+} + 1$. 

It follows that at iteration $t$, when the algorithm stops
\begin{equation}
\nonumber
Pr_{u}[Y=1,c_{t}=1]\left(\frac{e^{\delta(c_{t})}}{1 + e^{\delta(c_{t})}}-\frac{1}{2}\right)=\gamma.
 \end{equation}
The algorithm stops when $Pr[Y=1,c_{t}=1] =\alpha$, which implies $\delta_{t}=\delta_{m}$. Therefore, $t\leq T$. 
\end{proof}

\clearpage
\bibliographystyle{named}
\bibliography{ijcai19}

\end{document}